\documentclass[lettersize,journal]{IEEEtran}
\usepackage{amsmath,amsfonts}
\usepackage{array}
\usepackage{colortbl}
\usepackage{diagbox}
\usepackage{adjustbox}
\usepackage[caption=false,font=normalsize,labelfont=sf,textfont=sf]{subfig}
\usepackage{cite}
\usepackage{textcomp}
\usepackage{stfloats}
\usepackage{amsthm}
\usepackage{url}
\usepackage{bm}
\usepackage{algorithm}
\usepackage{algpseudocode}
\usepackage{verbatim}
\usepackage{graphicx}
\usepackage{multirow} 
\usepackage{color}
\usepackage{diagbox}
\usepackage{siunitx}
 \usepackage{hyperref}
 \usepackage{booktabs}

\def\BibTeX{{\rm B\kern-.05em{\sc i\kern-.025em b}\kern-.08em
    T\kern-.1667em\lower.7ex\hbox{E}\kern-.125emX}}
\usepackage{balance}
\newcommand{\mytheoremname}{\bfseries Theorem}

\newcommand{\mylemmaname}{\bfseries Lemma}
\newcommand{\myassumptionname}{\bfseries Assumption}
\newcommand{\mydefinitionname}{\bfseries Definition}
\newcommand{\mypropositionname}{\bfseries Proposition}
\newcommand{\myclaimname}{\bfseries Claim}

\newtheorem{theorem}{\mytheoremname}

\newtheorem{lemma}[theorem]{\mylemmaname} % 共享编号
\newtheorem{definition}{\mydefinitionname} % 共享编号
\renewcommand{\arraystretch}{1.2}

\begin{document}
\title{From Diffusion to Reaction-Diffusion: A Dynamical-Systems View of Oversmoothing in Hypergraph Neural Networks}
\author{ Zhiheng Zhou, Mengyao Zhou, Yancheng Chen, Dengyi Zhao, Xingqin Qi, Guiying Yan
\thanks{ This work was supported by the National Natural Science Foundation of China (No.12231018 and No.12471330) and the Shandong Provincial Natural
Science Foundation (No.ZR2025MS71). 
%and the Postdoctoral Innovation Program of Shandong Province (No.SDCX-ZG-202603012).
(Corresponding author:  Guiying Yan.)}
\thanks{Zhiheng Zhou, Dengyi Zhao and Xingqin Qi are with the School of Mathematics and Statistics, Shandong University, Weihai, Shandong 264209, China (e-mail: zhouzhiheng@amss.ac.cn; zhaodengyi@mail.sdu.edu.cn; qixingqin@sdu.edu.cn).}
\thanks{Mengyao Zhou, Yancheng Chen and Guiying Yan are with the Academy of Mathematics and Systems Science, Chinese Academy of Sciences and also with the University of Chinese Academy of Sciences, Beijing 100190, China (e-mail: zhoumengyao@amss.ac.cn; chenyancheng22@mails.ucas.ac.cn; yangy@amss.ac.cn).}
% \thanks{Qi Wang is with College of Science, China Agricultural University,
% Beijing, 100083, China (e-mail:  wangqi\_math@cau.edu.cn.)}
% \thanks{Guanghui Wang is with the School of Mathematics, Shandong University, Jinan, Shandong 250100, China (e-mail: 
% ghwang@sdu.edu.cn).}
}

\markboth{Journal of \LaTeX\ Class Files,~Vol.~18, No.~9, September~2020}%
{How to Use the IEEEtran \LaTeX \ Templates}\maketitle

This work has been submitted to the IEEE for possible publication. Copyright may be transferred without notice, after which this version may no longer be accessible.

\begin{abstract}
Higher-order couplings give hypergraph neural networks strong expressive power, but they also make deep propagation vulnerable to rapid representation collapse, especially when multi-way interactions induce strong feature mixing. This paper studies this issue from a dynamical-systems perspective and develops a reaction--diffusion framework for depth-resistant hypergraph learning. Using hypergraph gradient and divergence operators, we formulate message passing as a learnable incidence-level diffusion equation. Analyzing its long-time behavior reveals that pure diffusion generates a global continuous semiflow that exponentially contracts the null-mode-free component of node representations and drives the associated Dirichlet energy to zero. This identifies hypergraph oversmoothing as an intrinsic transverse-energy dissipation process induced by higher-order diffusion. Motivated by this characterization, we propose Hypergraph Neural Reaction--Diffusion (HNRD). Instead of relying on heuristic residual shortcuts, HNRD introduces a reaction term that acts only on the transverse component, combining instantaneous compensation of diffusion-induced dissipation with bounded feedback that stabilizes the transverse energy at a positive learnable level. The resulting dynamics are globally well posed and preserve nontrivial node-discriminative variation, ensuring that the null-mode-free Dirichlet energy remains bounded away from zero. A forward-Euler discretization yields a practical HNRD layer with a step-size condition for stable deep propagation. Experiments on benchmark and synthetic heterophilic hypergraphs show that HNRD consistently outperforms representative hypergraph baselines. Depth-wise, robustness, and runtime analyses further demonstrate that HNRD preserves stable accuracy and nonzero Dirichlet energy under deep propagation and perturbed settings, while maintaining practical efficiency. These results demonstrate that incidence-level dynamical priors offer a principled route to designing deep hypergraph architectures that preserve higher-order expressiveness without collapsing under propagation.
\end{abstract}

\begin{IEEEkeywords}
Hypergraph neural networks, oversmoothing, reaction--diffusion dynamics, hypergraph neural diffusion, continuous-depth models, Dirichlet energy
\end{IEEEkeywords}

\section{Introduction}
Hypergraphs provide a natural mathematical language for modeling multi-way interactions that cannot be faithfully reduced to pairwise edges, such as co-authorship groups, biochemical reactions, co-purchased item sets, and multi-party communication events. Hypergraph neural networks (HGNNs) extend message passing from pairwise neighborhoods to variable-size hyperedges, enabling node representations to aggregate information from higher-order relational structures. Existing models have explored spectral and clique-expansion formulations~\cite{feng2019hypergraph,yadati2019hypergcn}, attention and unified message-passing mechanisms~\cite{zhang2020hyper,huang2021unignn}, and permutation-invariant multiset architectures~\cite{chien2021you,heydari2022message}. These models have achieved strong empirical performance, demonstrating the importance of higher-order relational learning.

Despite this progress, scaling HGNNs to large depth remains challenging. In graph neural networks (GNNs), repeated neighborhood aggregation is known to drive node representations toward indistinguishable configurations, a phenomenon widely referred to as oversmoothing. This issue has been studied from several perspectives, including Laplacian smoothing, asymptotic convergence, Dirichlet-energy decay, Markov-chain interpretations, and representation collapse \cite{li2018deeper,oono2019graph,cai2020note,huang2020tackling,zhao2022comprehensive}. Various graph-level mitigation strategies have been proposed, such as stochastic edge dropping \cite{rong2019dropedge}, normalization \cite{zhao2019pairnorm}, residual and identity mappings \cite{chen2020simple}, decoupled propagation \cite{liu2020towards}, and multi-hop aggregation \cite{xu2018representation}. More recently, reaction--diffusion graph neural models have introduced reaction mechanisms to balance diffusion-induced smoothing and feature sharpening \cite{choi2023gread,eliasof2024graph}. However, these analyses and architectures are primarily developed for pairwise graphs, where propagation is governed by edge-based Laplacian operators; oversmoothing in hypergraphs, however, acquires a distinct structural signature.

Unlike pairwise graphs, a single hyperedge can simultaneously couple multiple nodes, producing stronger and more global mixing. Consequently, repeated hypergraph propagation may suppress node-wise heterogeneity more rapidly than ordinary graph propagation. Recent studies have begun to explicitly address this issue, including Deep-HGCN, which analyzes and mitigates oversmoothing in deep HGNNs \cite{chen2022preventing}, framelet-based hypergraph networks that preserve high-frequency components \cite{li2025deep}, sheaf-based hypergraph models that study hypergraph Dirichlet-energy dissipation \cite{duta2023sheaf}, and implicit hypergraph networks that analyze stability from a fixed-point perspective \cite{li2025implicit}. However, these discrete architectures often rely on heuristic residual connections, spectral truncations, or fixed-point formulations, and lack a principled mechanism to dynamically balance feature smoothing and diversity preservation as depth increases. A general dynamical explanation of why hypergraph diffusion causes collapse, and how to design hypergraph neural dynamics that provably avoid it, remains largely uncharted in the context of continuous neural dynamics.

To address this question, we develop a dynamical-systems framework for understanding and mitigating oversmoothing in HGNNs. Using a hypergraph gradient--divergence pair, we formulate message passing as a learnable incidence-level diffusion equation. This formulation makes it possible to analyze deep hypergraph propagation through semiflow theory and energy methods. Our analysis shows that pure hypergraph neural diffusion exponentially contracts the null-mode-free component of node representations and drives the associated Dirichlet energy to zero. This reveals a key mechanism behind hypergraph oversmoothing: higher-order diffusion progressively dissipates transverse, node-discriminative variation.

Guided by this insight, we introduce Hypergraph Neural Reaction--Diffusion (HNRD), an incidence-level reaction--diffusion model designed to counteract transverse-energy collapse. The reaction term acts only on the null-mode-free component and combines two effects: instantaneous compensation of diffusion-induced dissipation and bounded feedback toward a positive learnable energy level. This design preserves the stabilizing role of hypergraph diffusion while preventing node representations from collapsing into the null-mode subspace. We establish global well-posedness, positive preservation of the null-mode-free Dirichlet energy, and a stable forward-Euler discretization with a step-size condition for deep propagation.

Extensive experiments are conducted on academic, real-world, and synthetic heterophilic hypergraph datasets. HNRD achieves the best overall performance among representative graph diffusion models, standard HGNNs, expressive HGNNs, and deep/stable HGNNs. Depth-wise analysis further shows that HNRD maintains stable accuracy and nonzero Dirichlet energy under very deep propagation, providing empirical support for the proposed non-collapse theory. Additional robustness, parameter, visualization, and runtime analyses demonstrate that HNRD is effective, stable, and computationally practical under diverse settings.

The main contributions of this paper are summarized here:
\begin{itemize}
\item We identify oversmoothing in HGNNs as an intrinsic transverse-energy dissipation process induced by higher-order diffusion, and prove that pure hypergraph neural diffusion exponentially contracts the null-mode-free component and drives the Dirichlet energy to zero.

\item We introduce HNRD, an incidence-level hypergraph reaction--diffusion model whose reaction term explicitly compensates transverse diffusion dissipation and stabilizes a nonzero transverse-energy level.

\item We establish global well-posedness, transverse-energy preservation, and positive Dirichlet-energy lower bounds for HNRD, and further derive a discrete HNRD layer with a provable step-size condition for stable deep propagation.

\item We validate HNRD through extensive experiments on benchmark, real-world, and synthetic heterophilic hypergraphs, demonstrating its effectiveness, deep stability, robustness, and practical efficiency.
\end{itemize}

\section{Related Work}
\subsection{Hypergraph Neural Networks}

Hypergraph neural networks extend graph representation learning from pairwise edges to higher-order relational structures. Early representative models include spectral HGNNs based on normalized hypergraph Laplacians \cite{feng2019hypergraph} and graph-approximation methods such as HyperGCN \cite{yadati2019hypergcn}, which convert each hyperedge into representative pairwise connections. Subsequent architectures further improve the flexibility of hypergraph message passing. Hyper-SAGNN \cite{zhang2020hyper} introduces self-attention for variable-size hyperedges, HNHN \cite{dong2020hnhn} explicitly updates both node and hyperedge representations, UniGNN \cite{huang2021unignn} unifies graph and hypergraph propagation, and AllSet \cite{chien2021you} formulates hypergraph aggregation through learnable multiset functions. More general message-passing formulations have also been developed to characterize node--hyperedge interactions under a unified paradigm \cite{heydari2022message}.

Recent studies have pursued two complementary routes to deepen the theoretical foundations of hypergraph learning. The first route is grounded in energy principles and continuous dynamics. Energy-based formulations interpret hypergraph propagation through learnable regularized energy minimization \cite{wang2023hypergraph}; Hypergraph Dynamic System models representation evolution using ODE-based control and diffusion mechanisms \cite{yan2024hypergraph}; HNDiffN introduces coupled continuous-time diffusion equations on nodes and hyperedges \cite{lu2025hypergraph}; and PDE-inspired frameworks characterize hypergraph message passing as nonlinear anisotropic diffusion governed by gradient--divergence operators and learnable incidence-level coefficients \cite{zhou2026hypergraph}. The second route enhances spectral expressiveness and stability through alternative architectures, including framelet-based high-frequency preservation \cite{li2025deep}, sheaf-based diffusion on hypergraphs \cite{choi2025hypergraph}, and implicit fixed-point propagation with stability guarantees \cite{li2025implicit}.

Despite their diversity, these approaches share a common limitation: their analyses are either confined to the discrete layer level or tied to specific propagation rules. They cannot fully characterize the long-time asymptotic behavior of learnable hypergraph diffusion, nor do they provide a unified dynamical explanation of why repeated propagation suppresses node-discriminative information. A principled continuous-time framework that explains the mechanism of collapse and provides provable mitigation remains absent. In contrast, our work provides a continuous-time dynamical characterization of oversmoothing in hypergraph neural networks. We prove that pure hypergraph diffusion inevitably contracts all transverse components toward a null-mode subspace, giving a precise mathematical explanation of representation collapse. To counteract this intrinsic limitation, we further introduce a reaction--diffusion mechanism that compensates transverse dissipation and provably preserves node-discriminative variation, distinguishing our approach from heuristic residual connections, spectral truncations, and fixed-point stabilization schemes.

\subsection{Oversmoothing in Graph and Hypergraph Neural Networks}
Oversmoothing is a fundamental obstacle to building deep graph neural networks. Early studies interpret this phenomenon from the perspective of Laplacian smoothing, showing that repeated graph convolution tends to make node representations increasingly similar \cite{li2018deeper}. Subsequent analyses further characterize oversmoothing through asymptotic convergence, expressive-power decay, Dirichlet-energy dissipation, Markov-chain mixing, and representation collapse \cite{oono2019graph,cai2020note,huang2020tackling,zhao2022comprehensive}. To alleviate this issue, various architectural strategies have been proposed, including stochastic edge dropping \cite{rong2019dropedge}, normalization-based feature separation \cite{zhao2019pairnorm}, residual and identity mappings \cite{chen2020simple}, decoupled propagation \cite{liu2020towards}, and layer-wise feature aggregation \cite{xu2018representation}. More recent studies revisit oversmoothing from diffusion and representation-rank perspectives, such as operator semigroup analysis for diffusion-based GNNs \cite{zhao2025understanding} and rank-based characterizations of representation collapse \cite{deidda2025rethinking}. These works provide important insights into graph oversmoothing, but they mainly focus on pairwise graph operators.

Continuous-depth and reaction--diffusion models offer another route for understanding and mitigating oversmoothing. GRAND++ first introduces a source term into graph neural diffusion to prevent constant-state convergence \cite{thorpe2022grand++}. GREAD subsequently formulates graph learning as a reaction--diffusion process, where the reaction term balances diffusion-induced smoothing and feature sharpening \cite{choi2023gread}. RDGNN further develops reaction--diffusion models inspired by Turing instabilities and studies their continuous and discretized dynamics \cite{eliasof2024graph}. These methods demonstrate the usefulness of PDE-inspired mechanisms for controlling graph diffusion. However, their formulations are built on pairwise graph Laplacians or graph diffusion operators, and therefore do not directly capture the incidence-level higher-order mixing induced by hyperedges.

Oversmoothing in hypergraph neural networks has a distinct structural character because one hyperedge can simultaneously couple multiple nodes, leading to stronger and more global feature mixing. Deep-HGCN explicitly studies over-smoothing in HGNNs and proposes a deep hypergraph convolutional architecture to maintain representation heterogeneity \cite{chen2022preventing}. FrameHGNN analyzes oversmoothing in deep HGNNs from a spectral perspective and introduces tight framelet transforms with low- and high-pass components to preserve multi-frequency information \cite{li2025deep}. Sheaf-based hypergraph networks use hypergraph sheaf Laplacians to enrich higher-order diffusion and study Dirichlet-energy behavior \cite{duta2023sheaf}, while implicit hypergraph neural networks formulate propagation as a nonlinear fixed-point problem with stability and oversmoothing analysis \cite{li2025implicit}. More recently, Ricci-flow-guided hypergraph neural diffusion uses geometric flow to regulate hypergraph diffusion and alleviate feature homogenization \cite{zhou2026tackling}. Despite these advances, existing approaches are either tied to discrete architectural modifications, spectral/high-frequency preservation, sheaf-based propagation, geometric rewiring, or fixed-point stabilization. They do not provide a unified continuous-time explanation of why learnable hypergraph diffusion contracts node-discriminative components, nor do they offer a reaction mechanism that explicitly compensates transverse diffusion dissipation. In contrast, our HNRD characterizes oversmoothing as null-mode attraction with Dirichlet-energy decay and introduces an incidence-level reaction--diffusion dynamics that provably stabilizes nontrivial transverse variation.

\section{Preliminaries}
Let \(\mathcal{G}=(\mathcal{V},\mathcal{E},\omega)\) denote a weighted hypergraph, where \(\mathcal{V}\) is the set of nodes with \(|\mathcal{V}|=n\), \(\mathcal{E}\) is the set of hyperedges, and \(\omega:\mathcal{E}\rightarrow \mathbb{R}_{>0}\) assigns a positive weight \(\omega_e\) to each hyperedge \(e\in\mathcal{E}\). Each hyperedge contains a subset of nodes, namely \(e\subseteq \mathcal{V}\), with cardinality \(|e|\geq 2\). For a node \(v\in\mathcal{V}\), its degree is given by \(d_v=\sum_{e\in\mathcal{E}}\mathbb{I}(v\in e),\)
where \(\mathbb{I}(\cdot)\) is the indicator function. We denote the node--hyperedge incidence set by
\(\mathcal{I}=\{(e,v)\mid e\in\mathcal{E}, v\in e\},\)
with \(|\mathcal{I}|=N=\sum_{e\in\mathcal{E}}|e|.\)

The node degree matrix \(\mathbf{D_v}\in\mathbb{R}^{n\times n}\) is defined as a diagonal matrix with \(\mathbf{D_v}(v,v)=d_v.\)
We further define the incidence-level weight matrix
\(\mathbf{\Omega}_{\mathcal I}=\operatorname{diag}(\omega_e)_{(e,v)\in\mathcal{I}}
\in\mathbb{R}^{N\times N},\)
where the weight of each incidence pair \((e,v)\) is inherited from its corresponding hyperedge \(e\). The initial node feature matrix is denoted by \(\mathbf{X}_{\text{0}}\), and \(\|\cdot\|_F\) denotes the Frobenius norm. The node signal space is defined as
\(L(\mathcal{V}):=\{\mathbf f:\mathcal{V}\to\mathbb{R}^d\}
\cong\mathbb{R}^{n\times d},\) while the incidence-level signal space is defined as \(L(\mathcal{E},\mathcal{V})
:=\{\mathbf g:\mathcal{I}\to\mathbb{R}^d\}\cong\mathbb{R}^{N\times d}.\)
Here, \(d\) denotes the embedding dimension.

\subsection{Hypergraph Neural Diffusion}
Following the hypergraph neural differential formulation in~\cite{zhou2026hypergraph}, we characterize diffusion on a weighted hypergraph \(\mathcal{G}=(\mathcal{V},\mathcal{E},\omega)\) using incidence-level gradient and divergence operators. The gradient lifts node signals to node--hyperedge variations, while the divergence aggregates the resulting incidence-level flows back to nodes.

For a node function \(f\in L(V)\), the hypergraph gradient on each incidence pair \((e,v)\in I\) is defined as
\begin{equation}
(\nabla \mathbf f)(e,v)
=\frac{\mathbf f(v)}{\sqrt{d_v}}
-\frac{1}{|e|}
\sum_{u\in e}
\frac{\mathbf f(u)}{\sqrt{d_u}} .
\label{eq:hypergraph_gradient}
\end{equation}

The associated divergence operator maps incidence-level flows back to the node domain. For \(\mathbf g\in L(\mathcal E,\mathcal V)\), it is defined by
\begin{equation}
(\operatorname{div}\mathbf g)(v)
=
\sum_{e\ni v}
\frac{\omega_e}{\sqrt{d_v}}
\left(
\mathbf g(e,v)
-
\frac{1}{|e|}
\sum_{u\in e}
\mathbf g(e,u)
\right).
\label{eq:divergence}
\end{equation}

Using the above operators, the hypergraph neural diffusion equation is formulated as
\begin{equation}
\frac{\partial \mathbf X(t)}{\partial t}
=
-\operatorname{div}
\left[
\mathbf A_\theta(\mathbf X(t))\nabla \mathbf X(t)
\right],
\label{eq:deterministic_hypergraph_diffusion}
\end{equation}
where \(\mathbf X(t)\in L(\mathcal V)\) denotes the time-dependent node representation. The learnable operator \(\mathbf A_\theta(\mathbf X(t))\in\mathbb R^{N\times N}\) modulates the diffusion strength on the incidence space. Specifically, it is taken as a diagonal operator
\begin{equation}
\mathbf A_\theta(\mathbf X(t))
=
\operatorname{diag}
\left(
a_\theta(\mathbf X_v(t),\mathbf X_e(t))
\right)_{(e,v)\in\mathcal I},
\label{eq:modulation_operator}
\end{equation}
where \(\mathbf X_v(t)\) denotes the representation of node \(v\), and \(\mathbf X_e(t)\) denotes an aggregated representation of hyperedge \(e\). Thus, each diagonal entry \(a_\theta(\mathbf X_v(t),\mathbf X_e(t))\) adaptively determines the diffusion intensity between node \(v\) and hyperedge \(e\).

In matrix form~\cite{zhou2026hypergraph}, Eq.~\eqref{eq:deterministic_hypergraph_diffusion} becomes
\begin{equation}
\frac{\partial \mathbf X(t)}{\partial t}
=
-\mathbf G^\top
\mathbf A_\theta(\mathbf X(t))
\mathbf G\mathbf X(t).
\label{eq:G_matrix_hypergraph_diffusion}
\end{equation}
In this form, \(\mathbf G\in\mathbb R^{N\times n}\) extracts incidence-level variations from node representations, \(\mathbf A_\theta(\mathbf X(t))\) adaptively weights these variations, and \(\mathbf G^\top\in\mathbb R^{n\times N}\) aggregates the resulting flows back to the node domain.

\subsection{Semiflow and Attracting Sets}

We briefly introduce several dynamical-systems notions used in the subsequent analysis~\cite{hale2010asymptotic,temam2012infinite}. Throughout this paper, the phase space is the finite-dimensional Banach space
\(\mathcal X=\mathbb{R}^{n\times d},\)
equipped with the Frobenius norm.

\begin{definition}
\(\{\mathbf S(t)\}_{t\ge0}\), with \(\mathbf S(t):\mathcal X\to \mathcal X\), is called a continuous semiflow on (X) if
\begin{equation}
\mathbf S(0)=\mathbf I,\quad
\mathbf S(t+s)=\mathbf S(t)\circ \mathbf S(s),\quad t,s\ge 0,
\end{equation}
and the map \((t,x)\mapsto \mathbf S(t)x\) is continuous from \([0,\infty)\times \mathcal X\) to \(\mathcal X\). 
\end{definition}

\begin{definition}
A set \(\mathcal A\subset \mathcal X\) is called an attracting invariant set if for all $t\ge 0$, we have
\(\mathbf S(t)\mathcal A=\mathcal A,\) and for every bounded set \(B\subset\mathcal X\), when $t\to\infty$, then
\begin{equation}
\operatorname{dist}_{\mathcal X}(\mathbf S(t)B,\mathcal A)
:=\sup_{x\in B}\inf_{y\in\mathcal A}
\|\mathbf S(t)x-y\|_F\longrightarrow 0.
\end{equation}
\end{definition}
Attracting invariant sets are used to describe the asymptotic behavior of hypergraph neural diffusion and to interpret oversmoothing as convergence toward a low-dimensional null-mode subspace.

\section{Dynamic Systems View of Oversmoothing}
This section analyzes the long-time behavior of the hypergraph neural diffusion dynamics introduced above. The goal is to show that pure hypergraph diffusion naturally drives node representations toward a low-dimensional null-mode subspace, thereby providing a dynamical-systems interpretation of oversmoothing.

We study the dynamical properties of the hypergraph neural diffusion equation
\begin{equation}
\frac{\partial \mathbf X(t)}{\partial t}
=
-\mathbf G^{\top}\mathbf A_\theta(\mathbf X(t))\mathbf G\mathbf X(t),
\qquad
\mathbf X(0)=\mathbf X_0\in\mathcal X .
\label{eq:diffusion_semiflow_equation}
\end{equation}
where $\mathbf{A}_{\theta}(\mathbf{X}(t))=\operatorname{diag}\big(a_{\theta}(\mathbf{X}_v(t),\mathbf{X}_e(t))\big)_{(e,v)\in\mathcal{I}}\in\mathbb{R}^{N\times N}$ is a learnable diagonal modulation matrix defined over the node--hyperedge incidence set $\mathcal I$.

For each incidence pair $(e,v)\in\mathcal I$, we compute a hyperedge-level representation $\mathbf{X}_e(t)=\operatorname{Agg}_u(\{\mathbf{X}_u(t) : u \in e\})$, where $\operatorname{Agg}$ is a permutation-invariant aggregator, such as mean or max pooling. The compatibility between node \(v\) and hyperedge \(e\) is then modeled by
\begin{equation}
\begin{aligned}
r^{\theta}_{(e,v)} &= \sigma\left(\operatorname{MLP}_{\theta}(\mathbf{X}_v(t) \| \mathbf{X}_e(t))\right),\\
a_\theta(\mathbf{X}_v(t), \mathbf{X}_e(t))&= (1-\varepsilon)\frac{\exp\big(r^{\theta}_{(e,v)}\big)}{\sum_{e' \ni v} \exp\big( r^{\theta}_{(e',v)} \big)}+\frac{\varepsilon}{d_v},
\end{aligned}
\end{equation}
where $\|$ denotes concatenation, $\sigma$ is a $\operatorname{LeakyReLU}$ activation function, and \(0<\varepsilon<1\) is a small regularization constant that prevents the normalized incidence weights from degenerating to zero. This formulation guarantees that $a_\theta(\mathbf{X}_v(t), \mathbf{X}_e(t)) > 0$ and \( \sum_{e \ni v} a_\theta(\mathbf{X}_v(t), \mathbf{X}_e(t)) = 1\). We first record the basic properties induced by this normalized modulation.
\begin{lemma}
\label{lem:normalized_diagonal_modulation}
Assume that the hypergraph \(\mathcal G=(\mathcal V,\mathcal E,\omega)\) is connected. Let \(
d_{\max}:=\max_{v\in\mathcal V}|d_v|.\) Then the following properties hold:
\begin{enumerate}
\item[\textnormal{(i)}] \(\mathbf A_\theta(\mathbf X)\) is locally Lipschitz with respect to \(\mathbf X\);
\item[\textnormal{(ii)}] \(\mathbf A_\theta(\mathbf X)\) is uniformly positive definite. More precisely,
\begin{equation}
\mathbf I_N\succeq\mathbf A_\theta(\mathbf X)
\succeq
a_{\min}\mathbf I_N,
\quad
a_{\min}:=\frac{\varepsilon}{d_{\max}}>0;
\end{equation}
\item[\textnormal{(iii)}] the hypergraph gradient matrix satisfies
\begin{equation}
\ker(\mathbf G)=\operatorname{span}\{\phi\},
\quad
\phi=\mathbf D_v^{1/2}\mathbf 1 .
\end{equation}
Equivalently, for matrix-valued node representations,
\begin{equation}
\ker(\mathbf G)=
\mathcal S_\phi
:=
\{\phi c:\ c\in\mathbb R^{1\times d}\}.
\end{equation}
\end{enumerate}
\end{lemma}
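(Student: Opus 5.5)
The plan is to verify the three items separately, using only the explicit formulas for $a_\theta$, $\nabla$ and $\operatorname{div}$. First I would fix what $\mathbf G$ is. A short computation shows that $\operatorname{div}$ is the $\mathbf\Omega_{\mathcal I}$-weighted adjoint of $\nabla$: expanding $\langle\nabla\mathbf f,\mathbf g\rangle$ over $\mathcal I$ and regrouping by nodes gives $\operatorname{div}=\nabla^\top\mathbf\Omega_{\mathcal I}$. The key point is that the centering inside each hyperedge uses the single weight $\omega_e$, so $\mathbf\Omega_{\mathcal I}$ commutes with it. Hence the matrix form \eqref{eq:G_matrix_hypergraph_diffusion} holds with $\mathbf G=\mathbf\Omega_{\mathcal I}^{1/2}\nabla$, which is the reading I will use in (iii); it is consistent since $\mathbf A_\theta$ and $\mathbf\Omega_{\mathcal I}$ are both diagonal.

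For (i), I would write each diagonal entry $a_\theta(\mathbf X_v,\mathbf X_e)$ as a composition of locally Lipschitz maps:
\begin{itemize}
\item the aggregator $\mathbf X\mapsto\mathbf X_e$, which is linear for mean pooling and 1-Lipschitz for max pooling, since $|\max_i x_i-\max_i y_i|\le\max_i|x_i-y_i|$;
\item concatenation;
\item the MLP, which is Lipschitz for Lipschitz activations with fixed weights;
\item LeakyReLU, which is globally Lipschitz;
\item $\exp$, which is smooth and hence locally Lipschitz;
\item the normalization $r\mapsto e^{r_e}/\sum_{e'\ni v}e^{r_{e'}}$, which is smooth because its denominator is strictly positive.
\end{itemize}
Compositions, sums and products of locally Lipschitz maps are locally Lipschitz on bounded sets. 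Since $\mathbf A_\theta$ is diagonal, $\|\mathbf A_\theta(\mathbf X)-\mathbf A_\theta(\mathbf Y)\|_2=\max_{(e,v)}|a_\theta(\cdot)-a_\theta(\cdot)|$, and there are only finitely many entries, so (i) follows.

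For (ii), the eigenvalues of the diagonal matrix $\mathbf A_\theta$ are exactly its entries. The softmax factor lies in $(0,1)$, so $a_\theta\ge\varepsilon/d_v\ge\varepsilon/d_{\max}=a_{\min}$. For the upper bound, the softmax factor is at most $1$ and $d_v\ge1$, so $a_\theta\le(1-\varepsilon)+\varepsilon/d_v\le1$. The bound $d_v\ge1$ holds because, in a connected hypergraph, every node lies in some hyperedge (this uses $n\ge2$, which is implicit since hyperedges have $|e|\ge2$). This gives $\mathbf I_N\succeq\mathbf A_\theta\succeq a_{\min}\mathbf I_N$.

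For (iii), I would argue column by column, so it suffices to treat $d=1$; the matrix statement $\ker\mathbf G=\mathcal S_\phi$ then follows immediately. Since $\mathbf\Omega_{\mathcal I}^{1/2}$ is invertible, $\ker\mathbf G=\ker\nabla$.
\begin{itemize}
\item \emph{Inclusion $\phi\in\ker\nabla$.} Direct substitution gives $(\nabla\phi)(e,v)=1-\frac1{|e|}\sum_{u\in e}1=0$.
\item \emph{Reverse inclusion.} Suppose $\nabla f=0$ and set $h=\mathbf D_v^{-1/2}f$. Then $h(v)=\frac1{|e|}\sum_{u\in e}h(u)$ for every $v\in e$, so $h$ is constant on each hyperedge. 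Take any two nodes; connectivity gives a chain of hyperedges $e_1,\dots,e_k$ joining them, with consecutive hyperedges sharing a node. The constants on consecutive hyperedges coincide at the shared node, so $h$ is globally constant and $f=c\,\mathbf D_v^{1/2}\mathbf 1=c\phi$.
\end{itemize}

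The only steps needing care are identifying $\mathbf G$ correctly (so that $\ker\mathbf G=\ker\nabla$) and the connectivity chain argument. The Lipschitz bookkeeping in (i), including the max-pooling case, is routine once the composition structure is written out.
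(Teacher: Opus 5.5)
Your proposal is correct and follows essentially the same route as the paper: entrywise composition of locally Lipschitz maps for (i), the bounds $\varepsilon/d_{\max}\le a_\theta\le 1$ on the diagonal entries for (ii), and for (iii) the reduction $\ker\mathbf G=\ker\nabla$ via invertibility of $\mathbf\Omega_{\mathcal I}^{1/2}$, followed by the constant-on-each-hyperedge plus connectivity-chain argument. Your explicit checks that $\operatorname{div}=\nabla^\top\mathbf\Omega_{\mathcal I}$ (so $\mathbf G=\mathbf\Omega_{\mathcal I}^{1/2}\nabla$) and that $d_v\ge 1$ supply details that the paper either cites from prior work or leaves implicit.
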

The proof of Lemma~\ref{lem:normalized_diagonal_modulation} is provided in Appendix~\ref{app:Lemma1}. 
The lemma guarantees the regularity and coercivity of the diffusion vector field and identifies the lifted null-mode subspace \(\mathcal S_\phi=\ker(\mathbf G)\), thereby providing the structural foundation for the semiflow analysis below.

\begin{theorem}
\label{thm:diffusion_semiflow}
Under the conditions of Lemma~\ref{lem:normalized_diagonal_modulation}, for every initial condition \(\mathbf X_0\in\mathcal X\), Eq.~\eqref{eq:diffusion_semiflow_equation} admits a unique global classical solution. Consequently, the solution operator
\begin{equation}
\mathbf S_D(t)\mathbf X_0:=\mathbf X(t;\mathbf X_0),
\quad t\ge 0,
\end{equation}
defines a global continuous semiflow \(\{\mathbf S_D(t)\}_{t\ge0}\) on \(\mathcal X\).
\end{theorem}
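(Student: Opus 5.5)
The plan is to treat Eq.~\eqref{eq:diffusion_semiflow_equation} as an autonomous ODE \(\dot{\mathbf X}=F(\mathbf X)\) on the finite-dimensional space \(\mathcal X\), with
\[
F(\mathbf X):=-\mathbf G^\top\mathbf A_\theta(\mathbf X)\mathbf G\mathbf X .
\]
We then combine the Picard--Lindel\"of theorem for local existence with an a priori energy bound that rules out blow-up.

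\textbf{Step 1: local existence and uniqueness.} First we check that \(F\) is locally Lipschitz. By Lemma~\ref{lem:normalized_diagonal_modulation}(i), \(\mathbf X\mapsto\mathbf A_\theta(\mathbf X)\) is locally Lipschitz, and \(\mathbf X\mapsto\mathbf G\mathbf X\) is linear. On a ball of radius \(R\) we split
\[
F(\mathbf X)-F(\mathbf Y)=-\mathbf G^\top\bigl(\mathbf A_\theta(\mathbf X)-\mathbf A_\theta(\mathbf Y)\bigr)\mathbf G\mathbf X-\mathbf G^\top\mathbf A_\theta(\mathbf Y)\mathbf G(\mathbf X-\mathbf Y).
\]
Using \(\|\mathbf A_\theta\|\le1\) from Lemma~\ref{lem:normalized_diagonal_modulation}(ii), this gives a Lipschitz constant of the form \(\|\mathbf G\|^2(L_R R+1)\). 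Picard--Lindel\"of then yields a unique maximal \(C^1\) solution on \([0,T_{\max})\). Moreover, if \(T_{\max}<\infty\), then \(\|\mathbf X(t)\|_F\to\infty\) as \(t\to T_{\max}\).

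\textbf{Step 2: a priori bound, so the solution is global.} Differentiating \(\tfrac12\|\mathbf X(t)\|_F^2\) along the flow gives
\[
\tfrac{d}{dt}\tfrac12\|\mathbf X\|_F^2=-\operatorname{tr}\bigl((\mathbf G\mathbf X)^\top\mathbf A_\theta(\mathbf X)\mathbf G\mathbf X\bigr)\le -a_{\min}\|\mathbf G\mathbf X\|_F^2\le 0,
\]
by the positive definiteness in Lemma~\ref{lem:normalized_diagonal_modulation}(ii). Hence \(\|\mathbf X(t)\|_F\le\|\mathbf X_0\|_F\), which excludes blow-up, so \(T_{\max}=\infty\). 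Because \(F\) is continuous and the solution is \(C^1\), it is a classical solution.

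\textbf{Step 3: semiflow properties.} Set \(\mathbf S_D(t)\mathbf X_0=\mathbf X(t;\mathbf X_0)\). Then \(\mathbf S_D(0)=\mathbf I\) holds trivially. The identity \(\mathbf S_D(t+s)=\mathbf S_D(t)\mathbf S_D(s)\) follows from autonomy and uniqueness: \(\tau\mapsto\mathbf X(\tau+s;\mathbf X_0)\) solves the same equation with initial datum \(\mathbf S_D(s)\mathbf X_0\).

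For joint continuity in \((t,\mathbf X_0)\), fix \(T\) and initial data in a ball \(B_R\). Step 2 keeps both trajectories inside \(B_R\), so the Lipschitz constant \(L\) of \(F\) on \(B_R\) applies, and Gr\"onwall gives
\[
\|\mathbf S_D(t)\mathbf X_0-\mathbf S_D(t)\mathbf Y_0\|_F\le e^{Lt}\|\mathbf X_0-\mathbf Y_0\|_F .
\]
Continuity in \(t\) is uniform on bounded sets, since \(\|F\|\le\|\mathbf G\|^2R\) there. A triangle inequality combining the two estimates gives continuity of \((t,\mathbf X_0)\mapsto\mathbf S_D(t)\mathbf X_0\).

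\textbf{Main obstacle.} The delicate point is Step 1, specifically the local Lipschitz property of \(\mathbf A_\theta\). It is imported from Lemma~\ref{lem:normalized_diagonal_modulation}(i), but we should confirm that it actually holds. LeakyReLU, the MLP, and the softmax are all Lipschitz, but a max-pooling aggregator is only Lipschitz and not smooth; Lipschitz is all Picard--Lindel\"of needs, so this is sufficient. Everything else follows from the energy inequality in Step 2, which is exactly why the uniform bound \(\mathbf A_\theta\succeq a_{\min}\mathbf I_N\succ0\) matters.
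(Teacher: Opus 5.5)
Your proof is correct. The paper does not prove this theorem itself; it defers to earlier work. The natural comparison is therefore its proof of Theorem~\ref{thm:hnrd_wellposedness}, which has the same skeleton as yours: local Lipschitz continuity via Lemma~\ref{lem:normalized_diagonal_modulation}(i), Picard--Lindel\"of, and the blow-up alternative.

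The difference is how blow-up is excluded. The paper uses only the upper bound \(\mathbf A_\theta\preceq\mathbf I_N\) to get a linear growth estimate and then applies Gr\"onwall. In the pure-diffusion case this gives \(\|\mathbf X(t)\|_F\le e^{Ct}\|\mathbf X_0\|_F\) with \(C=\|\mathbf G\|_2^2\). That argument needs no sign information, and it carries over unchanged to HNRD once the reaction coefficient is bounded by Lemma~\ref{lem:bounded_reaction_coefficient}.

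You instead use \(\mathbf A_\theta\succeq0\) to make \(\tfrac12\|\mathbf X\|_F^2\) nonincreasing along the flow. This is special to pure diffusion, since the reaction term can increase the norm. In exchange it gives a stronger statement: the diffusion semiflow is norm-nonincreasing, every ball is forward invariant, and trajectories are bounded uniformly in time rather than by an exponentially growing bound. That invariance is what lets you run your Gr\"onwall continuity estimate with a single Lipschitz constant on \(B_R\). You also verify the semigroup law and joint continuity explicitly, which the paper only asserts as consequences of uniqueness and continuous dependence.

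One minor correction to your closing remark: Step 2 uses only \(\mathbf A_\theta(\mathbf X)\succeq0\). The strict bound \(a_{\min}>0\) plays no role in well-posedness; it matters only for the decay rate in Theorem~\ref{thm:null_mode_attraction}.
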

The proof of this result has been established in previous work~\cite{zhou2026hypergraph}; here we only use the induced semiflow formulation. It provides the basic well-posedness foundation for analyzing the long-time behavior of hypergraph neural diffusion.

We next examine whether the diffusion semiflow drives the representations toward a collapsed state. Let \(\mathbf \Pi_\phi\) be the orthogonal projector onto \(\mathcal S_\phi\), and define \(\mathbf Q_\phi=\mathbf I-\mathbf \Pi_\phi.\) Then \(\mathbf Q_\phi\mathbf X\) represents the transverse component of \(\mathbf X\), i.e., the part orthogonal to the null mode. Oversmoothing is characterized by the decay of this component along the diffusion semiflow.

\begin{theorem}
\label{thm:null_mode_attraction}
Under the conditions of Lemma~\ref{lem:normalized_diagonal_modulation}, the null-mode subspace \(\mathcal S_\phi\)
is invariant under the diffusion semiflow \(\{\mathbf S_D(t)\}_{t\ge0}\). Moreover, let \(
\lambda_2
:=
\min_{\mathbf Z\in \mathcal S_\phi^\perp,\ \|\mathbf Z\|_F=1}
\|\mathbf G\mathbf Z\|_F^2.
\) Then \(\lambda_2>0\), and for every \(\mathbf X_0\in\mathcal X\), and all $t\ge0$
\begin{equation}
\operatorname{dist}(\mathbf S_D(t)\mathbf X_0,\mathcal S_\phi)
=
\|\mathbf Q_\phi \mathbf S_D(t)\mathbf X_0\|_F
\le
e^{-\gamma_D t}
\|\mathbf Q_\phi\mathbf X_0\|_F,
\end{equation}
where \(\gamma_D:=a_{\min}\lambda_2>0 .\)
\end{theorem}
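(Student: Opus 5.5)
The plan is an energy estimate on the transverse component. Two structural facts will do most of the work: \(\mathbf G\mathbf\Pi_\phi=0\), since \(\mathcal S_\phi=\ker(\mathbf G)\) by Lemma~\ref{lem:normalized_diagonal_modulation}(iii), and \(\operatorname{range}(\mathbf G^\top)=\ker(\mathbf G)^\perp=\mathcal S_\phi^\perp\). Here \(\mathbf G\) acts on \(\mathbf X\in\mathbb R^{n\times d}\) by left multiplication, so columnwise. Hence \(\mathbf\Pi_\phi\mathbf G^\top=0\), and \(\mathbf\Pi_\phi\) can be written as left multiplication by \(\phi\phi^\top/\|\phi\|^2\).

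First, invariance. Projecting Eq.~\eqref{eq:diffusion_semiflow_equation} by \(\mathbf\Pi_\phi\) gives
\(\frac{d}{dt}\mathbf\Pi_\phi\mathbf X=-\mathbf\Pi_\phi\mathbf G^\top\mathbf A_\theta\mathbf G\mathbf X=0\).
So the null-mode component is conserved along trajectories. If \(\mathbf X_0=\phi c\in\mathcal S_\phi\), then the constant function \(\mathbf X(t)\equiv\phi c\) solves the equation, because \(\mathbf G\phi c=0\). By the uniqueness in Theorem~\ref{thm:diffusion_semiflow}, \(\mathbf S_D(t)\mathbf X_0=\mathbf X_0\). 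This yields \(\mathbf S_D(t)\mathcal S_\phi=\mathcal S_\phi\): every point is fixed, so the map is in particular onto.

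Second, positivity of \(\lambda_2\). The unit sphere of the finite-dimensional subspace \(\mathcal S_\phi^\perp\) is compact, so the minimum is attained at some \(\mathbf Z\). If \(\|\mathbf G\mathbf Z\|_F=0\), then \(\mathbf Z\in\ker\mathbf G\cap\mathcal S_\phi^\perp=\{0\}\), which contradicts \(\|\mathbf Z\|_F=1\). Hence \(\lambda_2>0\), and by homogeneity \(\|\mathbf G\mathbf Z\|_F^2\ge\lambda_2\|\mathbf Z\|_F^2\) for all \(\mathbf Z\in\mathcal S_\phi^\perp\).

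Third, the decay estimate. Set \(\mathbf Y(t)=\mathbf Q_\phi\mathbf X(t)\). Since \(\mathbf G\mathbf X=\mathbf G\mathbf Y\) and \(\mathbf Q_\phi\mathbf G^\top=\mathbf G^\top\), we get
\(\dot{\mathbf Y}=-\mathbf G^\top\mathbf A_\theta(\mathbf X)\mathbf G\mathbf Y\).
Consider the energy \(E(t)=\tfrac12\|\mathbf Y\|_F^2\). Differentiating and using the lower bound \(\mathbf A_\theta\succeq a_{\min}\mathbf I_N\) from Lemma~\ref{lem:normalized_diagonal_modulation}(ii), which applies columnwise because \(\mathbf A_\theta\) is a fixed diagonal matrix at each time, gives
\begin{equation*}
\frac{dE}{dt}=-\operatorname{tr}\big((\mathbf G\mathbf Y)^\top\mathbf A_\theta(\mathbf X)\mathbf G\mathbf Y\big)\le -a_{\min}\|\mathbf G\mathbf Y\|_F^2\le -a_{\min}\lambda_2\|\mathbf Y\|_F^2=-2\gamma_D E.
\end{equation*}
Grönwall's inequality then gives \(E(t)\le e^{-2\gamma_D t}E(0)\). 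Taking square roots yields the stated bound.

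Finally, the identity \(\operatorname{dist}(\mathbf X,\mathcal S_\phi)=\|\mathbf Q_\phi\mathbf X\|_F\) holds because orthogonal projection in the Frobenius inner product gives the nearest point of a closed subspace. The step that needs the most care is the bookkeeping of how the projector and \(\mathbf G\) act on matrix-valued signals. Concretely, one must check that \(\mathcal S_\phi^\perp\) in the Frobenius sense equals \(\{\mathbf Z:\phi^\top\mathbf Z=0\}\), so that \(\operatorname{range}\mathbf G^\top\subseteq\mathcal S_\phi^\perp\). This follows from \(\phi^\top\mathbf G^\top=(\mathbf G\phi)^\top=0\). The remaining steps are routine.
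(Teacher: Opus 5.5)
Your proof is correct and follows essentially the same route as the paper: invariance from $\ker(\mathbf G)=\mathcal S_\phi$, the transverse equation $\frac{d}{dt}\mathbf Q_\phi\mathbf X=-\mathbf G^\top\mathbf A_\theta(\mathbf X)\mathbf G\,\mathbf Q_\phi\mathbf X$, the energy estimate for $\tfrac12\|\mathbf Q_\phi\mathbf X(t)\|_F^2$ using $\mathbf A_\theta\succeq a_{\min}\mathbf I_N$ and the bound $\|\mathbf G\mathbf Z\|_F^2\ge\lambda_2\|\mathbf Z\|_F^2$ on $\mathcal S_\phi^\perp$, then Gr\"onwall, and $\lambda_2>0$ via the trivial intersection $\ker(\mathbf G)\cap\mathcal S_\phi^\perp=\{0\}$. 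You also make explicit two points the paper passes over quickly: you use uniqueness of solutions to justify that points of $\mathcal S_\phi$ stay fixed, and you check that $\operatorname{range}(\mathbf G^\top)\subseteq\mathcal S_\phi^\perp$, so that $\mathbf Q_\phi\mathbf G^\top=\mathbf G^\top$.
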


The proof of Theorem~\ref{thm:null_mode_attraction} is provided in Appendix~\ref{app:Theorem3}. Theorem~\ref{thm:null_mode_attraction} establishes that hypergraph neural diffusion exponentially suppresses all components orthogonal to the null-mode subspace. Hence, the diffusion trajectory is attracted to the low-dimensional subspace \(\mathcal S_\phi\), which gives a dynamical-systems characterization of oversmoothing.

To obtain an operator-based measure of this collapse, we further introduce a null-mode-free Dirichlet energy. Let \(\mathbf L_H\in\mathbb R^{n\times n}\) be a symmetric positive semidefinite hypergraph Laplacian satisfying
\(
\mathbf L_H\phi=0,\) and \(\ker(\mathbf L_H)=\operatorname{span}\{\phi\}.
\)
Define the null-mode-free Dirichlet energy by
\begin{equation}
\mathcal E_\phi(\mathbf X)
:=
\frac12
\left\langle
\mathbf Q_\phi\mathbf X,
\mathbf L_H \mathbf Q_\phi\mathbf X
\right\rangle_F .
\end{equation}
This energy vanishes exactly on the null-mode subspace and therefore provides a quantitative measure of the remaining node-wise variation.
\begin{theorem}
\label{thm:diffusion_dirichlet_decay}
Under the conditions of Lemma~\ref{lem:normalized_diagonal_modulation}, the Dirichlet energy of the diffusion semiflow converges to zero. More precisely, for every \(\mathbf X_0\in\mathcal X\),
\begin{equation}
\begin{aligned}
\mathcal E_\phi(\mathbf S_D(t)\mathbf X_0)
\le
\frac12
\lambda_{\max}(\mathbf L_H)
e^{-2\gamma_D t}
\|\mathbf Q_\phi\mathbf X_0\|_F^2,
\quad t\ge0,\\
\mathcal E_\phi(\mathbf S_D(t)\mathbf X_0)\to0
\Longleftrightarrow
\operatorname{dist}(\mathbf S_D(t)\mathbf X_0,\mathcal S_\phi)\to0,\;as\;t\to\infty.
\end{aligned}
\end{equation}
\end{theorem}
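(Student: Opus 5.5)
The plan is to reduce both claims to Theorem~\ref{thm:null_mode_attraction} using a two-sided spectral comparison between \(\mathcal E_\phi\) and the squared transverse norm. Write \(\mathbf Y(t):=\mathbf Q_\phi \mathbf S_D(t)\mathbf X_0\). Since \(\mathbf L_H\) is symmetric positive semidefinite, the Rayleigh quotient bound applied columnwise gives
\begin{equation}
\mathcal E_\phi(\mathbf S_D(t)\mathbf X_0)=\tfrac12\langle \mathbf Y(t),\mathbf L_H\mathbf Y(t)\rangle_F\le \tfrac12\lambda_{\max}(\mathbf L_H)\|\mathbf Y(t)\|_F^2 .
\end{equation}
Squaring the estimate \(\|\mathbf Y(t)\|_F\le e^{-\gamma_D t}\|\mathbf Q_\phi\mathbf X_0\|_F\) from Theorem~\ref{thm:null_mode_attraction} and substituting yields the stated exponential bound for all \(t\ge0\). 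Letting \(t\to\infty\) then gives \(\mathcal E_\phi\to0\), since \(\gamma_D>0\).

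For the equivalence, one direction follows directly from the displayed upper bound: if \(\operatorname{dist}(\mathbf S_D(t)\mathbf X_0,\mathcal S_\phi)=\|\mathbf Y(t)\|_F\to0\), then \(\mathcal E_\phi\to0\). For the converse we need a matching lower bound. Because \(\ker(\mathbf L_H)=\operatorname{span}\{\phi\}\), the space \(\mathcal S_\phi^\perp\) consists of matrices whose columns are all orthogonal to \(\phi\). On this complement \(\mathbf L_H\) is bounded below by its smallest positive eigenvalue \(\mu_2(\mathbf L_H)>0\). Each column of \(\mathbf Y(t)\) lies in \(\phi^\perp\), so
\begin{equation}
\mathcal E_\phi(\mathbf S_D(t)\mathbf X_0)\ge \tfrac12\mu_2(\mathbf L_H)\|\mathbf Y(t)\|_F^2 .
\end{equation}
Hence \(\mathcal E_\phi\to0\) forces \(\|\mathbf Y(t)\|_F\to0\). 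We also use the identity \(\operatorname{dist}(\mathbf X,\mathcal S_\phi)=\|\mathbf Q_\phi\mathbf X\|_F\), which holds because \(\mathbf \Pi_\phi\) is the orthogonal projector onto the closed subspace \(\mathcal S_\phi\).

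The only point requiring care is to check that \(\mathbf Q_\phi\) acts columnwise as the projector onto \(\phi^\perp\subset\mathbb R^n\). This holds because \(\mathcal S_\phi=\phi\,\mathbb R^{1\times d}\), so \(\mathbf \Pi_\phi=\frac{\phi\phi^\top}{\|\phi\|^2}\) acting on the left. With this, the Frobenius inner product splits into a sum over the \(d\) columns, and both the \(\lambda_{\max}\) bound and the \(\mu_2\) bound apply column by column. I do not expect a genuine obstacle: the dynamical content is supplied entirely by Theorem~\ref{thm:null_mode_attraction}, and the rest is finite-dimensional spectral comparison.
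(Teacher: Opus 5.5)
Your proposal is correct and follows essentially the same route as the paper. The Rayleigh-quotient bound $\langle \mathbf Z,\mathbf L_H\mathbf Z\rangle_F\le\lambda_{\max}(\mathbf L_H)\|\mathbf Z\|_F^2$, combined with the contraction estimate of Theorem~\ref{thm:null_mode_attraction}, gives the exponential decay; the two-sided comparison of $\mathcal E_\phi$ with $\|\mathbf Q_\phi\mathbf X\|_F^2$ gives the equivalence (your $\mu_2(\mathbf L_H)$ is exactly the paper's $\lambda_{H,2}$), and your columnwise identification $\mathbf{\Pi}_\phi=\phi\phi^\top/\|\phi\|^2$ simply makes explicit a step the paper leaves implicit.
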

The proof of Theorem~\ref{thm:diffusion_dirichlet_decay} is provided in Appendix~\ref{app:Theorem4}. Together, Theorems~\ref{thm:null_mode_attraction} and \ref{thm:diffusion_dirichlet_decay} provide a complete dynamical characterization of hypergraph neural diffusion. The diffusion semiflow is globally well-defined, preserves the null-mode subspace, exponentially contracts all transverse components, and drives the associated Dirichlet energy to zero. Hence, diffusion inevitably removes null-mode-free information in the long-time regime, which explains why deeply stacked diffusion-based hypergraph propagation may suffer from oversmoothing.

This limitation motivates the introduction of a reaction mechanism. The reaction term is designed to counterbalance the transverse dissipation induced by the diffusion operator, so that the null-mode-free component remains non-degenerate while the stabilizing effect of hypergraph diffusion is retained.

\section{Hypergraph Neural Reaction–-Diffusion}
\label{sec:hnrd}

\begin{figure}[ht]
\centering
{\includegraphics[scale=0.77]{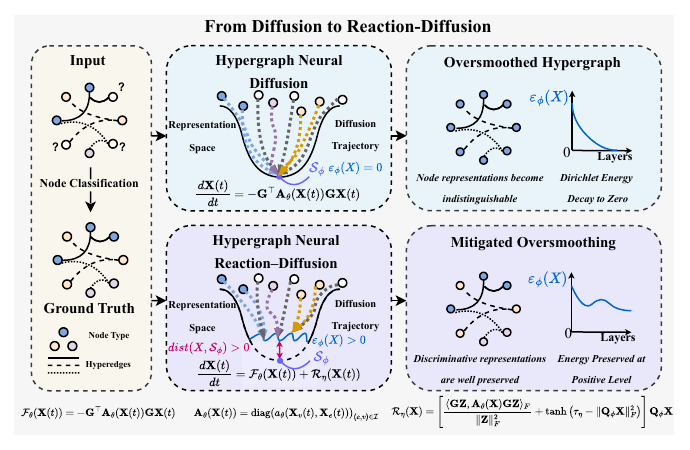}}
\caption{The upper panel shows that hypergraph diffusion drives node representations toward indistinguishability and causes the Dirichlet energy to decay, resulting in oversmoothing. The lower panel introduces a learnable reaction term to preserve representation diversity and maintain positive energy, thereby mitigating oversmoothing.}
\label{HNRD}
\end{figure}

The previous section shows that hypergraph neural diffusion intrinsically drives node representations toward the null-mode subspace \(\mathcal S_\phi\). Although such diffusion provides a useful smoothing and denoising mechanism, its long-time behavior inevitably suppresses the transverse component \(Q_\phi\mathbf X(t)\), leading to oversmoothing. To prevent this collapse, we introduce a learnable reaction mechanism that counterbalances the transverse dissipation induced by diffusion while retaining the stabilizing effect of the hypergraph diffusion operator.

Recall the learnable hypergraph diffusion equation
\begin{equation}
\frac{d\mathbf X(t)}{dt}
=\mathcal F_\theta(\mathbf X(t))=
-\mathbf G^\top
\mathbf A_\theta(\mathbf X(t))
\mathbf G\mathbf X(t),
\label{eq:pure_learnable_diffusion}
\end{equation}
where \(\mathbf A_\theta(\mathbf X(t))\) is a learnable diagonal modulation matrix. Let
\(
\mathbf Z(t):=\mathbf Q_\phi\mathbf X(t),\) and \(
s(t):=\|\mathbf Z(t)\|_F^2 .\)
The quantity \(s(t)\) measures the null-mode-free, or transverse, energy of the node representation. Under pure diffusion, \(s(t)\) decays to zero, which corresponds to attraction toward \(\mathcal S_\phi\).

To counteract this collapse, we design a reaction term acting only on the transverse component. The reaction should satisfy two requirements. First, it should compensate the instantaneous transverse dissipation produced by the diffusion operator. Second, it should introduce a bounded saturating feedback that drives the transverse energy toward a nonzero learnable level. These two requirements lead to the following reaction design. We define the instantaneous diffusion dissipation rate
\begin{equation}
R_\theta(\mathbf X)
:=
\frac{
\left\langle
\mathbf G\mathbf Z,
\mathbf A_\theta(\mathbf X)\mathbf G\mathbf Z
\right\rangle_F
}{
\|\mathbf Z\|_F^2
}.
\label{eq:instantaneous_diffusion_dissipation}
\end{equation}
When $\mathbf Z=\mathbf Q_\phi\mathbf X=0$, we define $R_\theta(\mathbf X)=0$. The scalar \(R_\theta(\mathbf X)\) measures the diffusion-induced dissipation per unit transverse energy. It depends on both the hypergraph structure and the learnable diffusion modulation \(\mathbf A_\theta(\mathbf X)\).

We then define the learnable reaction term as
\begin{equation}
\mathcal R_\eta(\mathbf X)
=
\left[
R_\theta(\mathbf X)
+
\tanh
\left(
\tau_\eta-\|\mathbf Q_\phi\mathbf X\|_F^2
\right)
\right]
\mathbf Q_\phi\mathbf X,
\label{eq:learnable_reaction_term}
\end{equation}
where \(\tau_\eta=\operatorname{softplus}(\eta)>0\) is a learnable transverse-energy level. The first term \(R_\theta(\mathbf X)\mathbf Q_\phi\mathbf X\) compensates the instantaneous transverse dissipation of the diffusion operator, while the bounded feedback term \(\tanh\left(
\tau_\eta-\|\mathbf Q_\phi\mathbf X\|_F^2\right)\mathbf Q_\phi\mathbf X\) drives the transverse energy toward the learnable nonzero level \(\tau_\eta\). Since the feedback coefficient is bounded between \(-1\) and \(1\), the reaction mechanism avoids unbounded reaction strength.

The Hypergraph Neural Reaction--Diffusion (HNRD) is
\begin{equation}
\frac{d\mathbf X(t)}{dt}
=\mathcal F_\theta(\mathbf X(t))
+\mathcal R_\eta(\mathbf X(t)),\quad
\mathbf X(0)=\mathbf X_0\in\mathcal X .
\label{eq:hnrd_equation}
\end{equation}

\subsection{Theoretical Analysis}
The reaction term in Eq.~\eqref{eq:learnable_reaction_term} is designed to preserve the null-mode decomposition. Since it is proportional to \(\mathbf Q_\phi\mathbf X\), it acts only on the transverse component and does not directly perturb the null-mode component \(\mathbf \Pi_\phi\mathbf X\). Therefore, the reaction term does not shift the null-mode subspace itself; instead, it regulates whether the trajectory collapses toward this subspace. The following properties summarize the key structural role of the reaction term.
\begin{lemma}
\label{lem:bounded_reaction_coefficient}
Under the conditions of Lemma~\ref{lem:normalized_diagonal_modulation}, for every \(\mathbf X\notin\mathcal S_\phi\),
\(0\le R_\theta(\mathbf X)\le\lambda_{\max}(\mathbf L_{H}).\)
Consequently, the scalar reaction coefficient
\begin{equation}
R_\theta(\mathbf X)
+
\tanh
\left(
\tau_\eta-\|\mathbf Q_\phi\mathbf X\|_F^2
\right)
\end{equation}
is uniformly bounded.
\end{lemma}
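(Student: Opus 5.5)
The plan is to sandwich the quadratic form in the numerator of \eqref{eq:instantaneous_diffusion_dissipation} between multiples of \(\|\mathbf Z\|_F^2\), using the spectral bounds on \(\mathbf A_\theta\) from Lemma~\ref{lem:normalized_diagonal_modulation}(ii). Throughout, I identify the Laplacian in the statement with the operator generated by the gradient, \(\mathbf L_H=\mathbf G^\top\mathbf G\). This is the choice implicit in the diffusion model \eqref{eq:G_matrix_hypergraph_diffusion}. It is symmetric and positive semidefinite, and by Lemma~\ref{lem:normalized_diagonal_modulation}(iii) it satisfies \(\ker(\mathbf G^\top\mathbf G)=\ker(\mathbf G)=\operatorname{span}\{\phi\}\), so it meets the requirements placed on \(\mathbf L_H\). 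If a different \(\mathbf L_H\) were intended, the same argument gives the bound with \(\lambda_{\max}(\mathbf G^\top\mathbf G)\) in place of \(\lambda_{\max}(\mathbf L_H)\), and only the constant changes.

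\textbf{Step 1 (well-definedness).} Take \(\mathbf X\notin\mathcal S_\phi\). Then \(\mathbf Z=\mathbf Q_\phi\mathbf X\neq 0\), because \(\mathbf Q_\phi\mathbf X=0\) exactly when \(\mathbf X\in\mathcal S_\phi\). So the denominator \(\|\mathbf Z\|_F^2\) is positive and \(R_\theta(\mathbf X)\) is given by the quotient rather than by the convention \(R_\theta=0\).

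\textbf{Step 2 (lower bound).} Lemma~\ref{lem:normalized_diagonal_modulation}(ii) gives \(\mathbf A_\theta(\mathbf X)\succeq a_{\min}\mathbf I_N\succ0\). Applying this column by column to \(\mathbf G\mathbf Z\) yields
\[
\langle \mathbf G\mathbf Z,\mathbf A_\theta(\mathbf X)\mathbf G\mathbf Z\rangle_F\ \ge\ a_{\min}\|\mathbf G\mathbf Z\|_F^2\ \ge\ 0 ,
\]
so \(R_\theta(\mathbf X)\ge0\). In fact Theorem~\ref{thm:null_mode_attraction} gives the stronger bound \(R_\theta\ge a_{\min}\lambda_2\), but only nonnegativity is needed here.

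\textbf{Step 3 (upper bound).} Using \(\mathbf A_\theta(\mathbf X)\preceq \mathbf I_N\) from the same lemma,
\[
\langle \mathbf G\mathbf Z,\mathbf A_\theta\mathbf G\mathbf Z\rangle_F\le\|\mathbf G\mathbf Z\|_F^2=\langle \mathbf Z,\mathbf G^\top\mathbf G\mathbf Z\rangle_F\le\lambda_{\max}(\mathbf L_H)\|\mathbf Z\|_F^2 .
\]
The last inequality is the Rayleigh quotient bound, applied to each of the \(d\) columns and then summed. Dividing by \(\|\mathbf Z\|_F^2\) gives \(R_\theta(\mathbf X)\le\lambda_{\max}(\mathbf L_H)\).

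\textbf{Step 4 (uniform boundedness of the coefficient).} Since \(\tanh\) takes values in \((-1,1)\), the scalar reaction coefficient lies in \([-1,\ \lambda_{\max}(\mathbf L_H)+1]\) for every \(\mathbf X\notin\mathcal S_\phi\). On \(\mathcal S_\phi\) we have \(R_\theta=0\), so the coefficient equals \(\tanh(\tau_\eta)\in(0,1)\). This bound is uniform in \(\mathbf X\) and in the learnable parameters.

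The only step that needs care is Step 3's identification of \(\mathbf L_H\) with \(\mathbf G^\top\mathbf G\). The preliminaries do not pin down \(\mathbf L_H\) explicitly, so the argument will state this identification (or the \(\lambda_{\max}(\mathbf G^\top\mathbf G)\) fallback) explicitly. Everything else is a direct consequence of Lemma~\ref{lem:normalized_diagonal_modulation}.
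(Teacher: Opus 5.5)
Your proposal is correct and follows essentially the same route as the paper's proof. Both arguments get nonnegativity from \(\mathbf A_\theta(\mathbf X)\succeq 0\), the upper bound from \(\mathbf A_\theta(\mathbf X)\preceq\mathbf I_N\) together with \(\mathbf L_H=\mathbf G^\top\mathbf G\) (which the paper also invokes explicitly) and a columnwise Rayleigh-quotient bound, and then use \(|\tanh|\le 1\) to place the coefficient in \([-1,\lambda_{\max}(\mathbf L_H)+1]\). Your explicit treatment of \(\mathbf X\in\mathcal S_\phi\) and your flag on how \(\mathbf L_H\) is specified are worthwhile small additions.
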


The proof of Lemma~\ref{lem:bounded_reaction_coefficient} is provided in Appendix~\ref{app:Lemma5}. This lemma shows that the proposed reaction term does not introduce an unbounded reaction coefficient. The boundedness of the reaction coefficient provides the key regularity needed for the reaction--diffusion dynamics. We next show that the proposed HNRD equation is well posed and generates a global continuous semiflow on the finite-dimensional phase space \(\mathcal X\).

\begin{theorem}
\label{thm:hnrd_wellposedness}
Under the conditions of Lemma~\ref{lem:normalized_diagonal_modulation}, for every initial condition \(\mathbf X_0\in\mathcal X\), the HNRD equation Eq.~\eqref{eq:hnrd_equation} admits a unique global classical solution. Consequently, the solution operator
\begin{equation}
S_R(t)\mathbf X_0:=\mathbf X(t;\mathbf X_0),
\quad t\ge0,
\end{equation}
defines a global continuous semiflow \(\{S_R(t)\}_{t\ge0}\) on \(\mathcal X\).
\end{theorem}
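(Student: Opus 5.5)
The plan is the standard ODE route in the finite-dimensional phase space $\mathcal X$: establish local existence and uniqueness via local Lipschitz continuity of the vector field, and then rule out finite-time blow-up with an a priori bound.

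\textbf{Step 1: the diffusion part.} By Lemma~\ref{lem:normalized_diagonal_modulation}(i), $\mathbf A_\theta$ is locally Lipschitz. Since $\mathbf G$ is linear, $\mathcal F_\theta(\mathbf X)=-\mathbf G^\top\mathbf A_\theta(\mathbf X)\mathbf G\mathbf X$ is a product of locally Lipschitz maps and is therefore locally Lipschitz.

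\textbf{Step 2: the reaction part.} The term $\tanh(\tau_\eta-\|\mathbf Q_\phi\mathbf X\|_F^2)\mathbf Q_\phi\mathbf X$ is smooth, so the only delicate piece is $R_\theta(\mathbf X)\mathbf Q_\phi\mathbf X$ near the null-mode subspace $\mathcal S_\phi$, where $\mathbf Z=\mathbf Q_\phi\mathbf X$ vanishes. This is the main obstacle.

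\begin{itemize}
\item \emph{Away from $\mathcal S_\phi$.} The map $\mathbf X\mapsto R_\theta(\mathbf X)\mathbf Z$ is a quotient of locally Lipschitz functions with denominator bounded away from zero on compact sets avoiding $\mathcal S_\phi$, hence locally Lipschitz there.
\item \emph{Continuity at $\mathcal S_\phi$.} Lemma~\ref{lem:bounded_reaction_coefficient} gives $0\le R_\theta\le\lambda_{\max}(\mathbf L_H)$, so $\|R_\theta(\mathbf X)\mathbf Z\|_F\le C\|\mathbf Z\|_F\to0$. This makes the term continuous with value $0$ on $\mathcal S_\phi$.
\item \emph{Lipschitz estimate near $\mathcal S_\phi$.} Write $N(\mathbf X)=\langle\mathbf G\mathbf Z,\mathbf A_\theta(\mathbf X)\mathbf G\mathbf Z\rangle_F$ and
\[R_\theta(\mathbf X)\mathbf Z-R_\theta(\mathbf Y)\mathbf W=R_\theta(\mathbf X)(\mathbf Z-\mathbf W)+\big(R_\theta(\mathbf X)-R_\theta(\mathbf Y)\big)\mathbf W.\]
The first piece is bounded by $C\|\mathbf Z-\mathbf W\|_F$. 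For the second, $N$ is quadratic in $\mathbf Z$ with a locally Lipschitz coefficient. I would try to show $|N(\mathbf X)/\|\mathbf Z\|^2-N(\mathbf Y)/\|\mathbf W\|^2|\,\|\mathbf W\|\le C\|\mathbf X-\mathbf Y\|$, using homogeneity of degree zero of $\mathbf Z\mapsto N/\|\mathbf Z\|^2$. The difference of unit directions satisfies $\|\mathbf Z/\|\mathbf Z\|-\mathbf W/\|\mathbf W\|\|\le 2\|\mathbf Z-\mathbf W\|/\|\mathbf W\|$, and multiplying by $\|\mathbf W\|$ removes the singular factor.
\item \emph{Fallback.} If this estimate does not close, I would settle for continuity of the vector field. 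Peano then gives existence, and uniqueness would come from a one-sided Lipschitz (monotonicity-type) estimate on $\langle \mathcal R_\eta(\mathbf X)-\mathcal R_\eta(\mathbf Y),\mathbf X-\mathbf Y\rangle$ via the same decomposition.
\end{itemize}

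\textbf{Step 3: global existence.} I would bound the two components of the solution separately.

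\begin{itemize}
\item \emph{Null-mode component.} Since $\mathbf G\mathbf\Pi_\phi=0$, we have $\mathbf\Pi_\phi\mathbf G^\top=0$, and $\mathbf\Pi_\phi\mathbf Q_\phi=0$. Hence $\tfrac{d}{dt}\mathbf\Pi_\phi\mathbf X=0$, so $\mathbf\Pi_\phi\mathbf X(t)=\mathbf\Pi_\phi\mathbf X_0$.
\item \emph{Transverse energy.} For $s=\|\mathbf Z\|_F^2$, use $\mathbf Q_\phi\mathbf G^\top=\mathbf G^\top$ (because $\mathbf G\mathbf\Pi_\phi=0$). This gives
\[\dot s=-2N+2(R_\theta+\tanh(\tau_\eta-s))s=2\tanh(\tau_\eta-s)\,s,\]
since $R_\theta s=N$, so the diffusion dissipation cancels exactly.
\item \emph{Conclusion.} It follows that $\dot s\le 2s$, and more sharply $\dot s<0$ whenever $s>\tau_\eta$. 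Thus $s(t)\le\max(s(0),\tau_\eta)$, and $\|\mathbf X(t)\|_F$ stays bounded on the maximal interval. The continuation theorem then gives a global solution.
\end{itemize}

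\textbf{Step 4: semiflow properties.} The semigroup property $S_R(t+s)=S_R(t)S_R(s)$ follows from uniqueness and autonomy. Joint continuity of $(t,\mathbf X_0)\mapsto S_R(t)\mathbf X_0$ follows from continuous dependence on initial data. Concretely, a Gronwall argument on compact time intervals uses the local Lipschitz constant on the ball of radius $\|\mathbf\Pi_\phi\mathbf X_0\|_F+\max(\|\mathbf Q_\phi\mathbf X_0\|_F,\sqrt{\tau_\eta})+1$, which contains all nearby trajectories by Step 3.
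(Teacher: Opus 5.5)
Your proposal is correct, but it differs from the paper's proof at two points.

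\textbf{Local well-posedness.} The paper simply asserts that $R_\theta(\mathbf X)$ is locally Lipschitz and applies Picard--Lindel\"of. Taken literally this is false. For $\mathbf Q_\phi\mathbf X\neq0$ one has $R_\theta(\mathbf X)\ge a_{\min}\lambda_2>0$, while $R_\theta=0$ on $\mathcal S_\phi$, so $R_\theta$ is discontinuous along the whole null-mode subspace. What is actually needed is local Lipschitz continuity of the product $R_\theta(\mathbf X)\mathbf Q_\phi\mathbf X$. That is exactly what your Step~2 targets, and your estimate closes:
\begin{itemize}
\item Write $R_\theta(\mathbf X)=\langle\mathbf G\mathbf u,\mathbf A_\theta(\mathbf X)\mathbf G\mathbf u\rangle_F$ with $\mathbf u=\mathbf Z/\|\mathbf Z\|_F$, and similarly $\mathbf w=\mathbf W/\|\mathbf W\|_F$.
\item The change in the coefficient contributes at most $\|\mathbf G\|_2^2L_K\|\mathbf X-\mathbf Y\|_F\|\mathbf W\|_F$, where $L_K$ is a Lipschitz constant of $\mathbf A_\theta$ on a compact set $K$ containing $\mathbf X,\mathbf Y$.
\item The change in direction contributes at most $2\|\mathbf G\|_2^2\|\mathbf u-\mathbf w\|_F\|\mathbf W\|_F\le4\|\mathbf G\|_2^2\|\mathbf Z-\mathbf W\|_F$.
\item The cases $\mathbf Z=0$ or $\mathbf W=0$ follow from $\|R_\theta(\mathbf X)\mathbf Z\|_F\le\lambda_{\max}(\mathbf L_H)\|\mathbf Z\|_F$.
\end{itemize}
So your Peano/one-sided-Lipschitz fallback is unnecessary. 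This step repairs a gap in the paper's argument rather than reproducing it.

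\textbf{Global existence.} The paper uses the linear-growth bound $\|\mathcal H_{\theta,\eta}(\mathbf X)\|_F\le C\|\mathbf X\|_F$, with $C=\|\mathbf G\|_2^2+\lambda_{\max}(\mathbf L_H)+1$. Gronwall then gives $\|\mathbf X(t)\|_F\le e^{Ct}\|\mathbf X_0\|_F$. You instead use the structure of the equation: $\mathbf{\Pi}_\phi\mathbf X$ is conserved, and $\dot s=2\tanh(\tau_\eta-s)s$ by exact cancellation. The paper defers this computation to Theorem~\ref{thm:hnrd_transverse_stabilization}. It gives the uniform-in-time bound $\|\mathbf X(t)\|_F\le\|\mathbf{\Pi}_\phi\mathbf X_0\|_F+\max(\|\mathbf Q_\phi\mathbf X_0\|_F,\sqrt{\tau_\eta})$.

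The paper's route is shorter and needs only the coefficient bound of Lemma~\ref{lem:bounded_reaction_coefficient}. Yours needs the identity $\mathbf{\Pi}_\phi\mathbf G^\top=0$, but it yields bounded trajectories. You then use these to make the continuous-dependence argument of Step~4 explicit, which the paper takes as standard.
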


The proof of Theorem~\ref{thm:hnrd_wellposedness} is provided in Appendix~\ref{app:Theorem6}. This theorem establishes that the proposed reaction--diffusion equation is dynamically well defined, so the subsequent long-time analysis can be carried out on its induced semiflow.

We now show that the proposed reaction term prevents the transverse component from collapsing. The following theorem shows that the reaction--diffusion dynamics drives \(s(t)\) toward the learnable nonzero level \(\tau_\eta\).

\begin{theorem}
\label{thm:hnrd_transverse_stabilization}
Under the conditions of Lemma~\ref{lem:normalized_diagonal_modulation}, let \(\{\mathbf S_R(t)\}_{t\ge0}\) be the global continuous semiflow generated by Eq.~\eqref{eq:hnrd_equation}. Assume that \(\mathbf Q_\phi\mathbf X_0\neq0\). Then the transverse energy \(\|\mathbf Q_\phi\mathbf S_R(t)\mathbf X_0\|_F^2\)
satisfies
\begin{equation}
\lim_{t\to\infty}
\|\mathbf Q_\phi\mathbf S_R(t)\mathbf X_0\|_F^2
=\lim_{t\to\infty}\operatorname{dist}^2
\left(
\mathbf S_R(t)\mathbf X_0,
\mathcal S_\phi
\right)=\tau_{\eta}>0 .
\end{equation}
\end{theorem}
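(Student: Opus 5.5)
The plan is to reduce the HNRD dynamics to a closed scalar ODE for the transverse energy \(s(t)=\|\mathbf Q_\phi\mathbf X(t)\|_F^2\), where \(\mathbf X(t)=\mathbf S_R(t)\mathbf X_0\) is the global classical solution given by Theorem~\ref{thm:hnrd_wellposedness}. The second equality in the statement is immediate. Since \(\mathbf \Pi_\phi\) is the orthogonal projector onto \(\mathcal S_\phi\), we have \(\operatorname{dist}(\mathbf X,\mathcal S_\phi)=\|\mathbf X-\mathbf\Pi_\phi\mathbf X\|_F=\|\mathbf Q_\phi\mathbf X\|_F\), exactly as in Theorem~\ref{thm:null_mode_attraction}. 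So everything hinges on proving \(s(t)\to\tau_\eta\).

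\emph{Step 1: evolution of \(\mathbf Z=\mathbf Q_\phi\mathbf X\).} By Lemma~\ref{lem:normalized_diagonal_modulation}(iii), \(\ker\mathbf G=\mathcal S_\phi\), which gives two identities:
\begin{itemize}
\item \(\mathbf G\mathbf\Pi_\phi=0\), so \(\mathbf G\mathbf X=\mathbf G\mathbf Z\);
\item \(\operatorname{range}(\mathbf G^\top)=\ker(\mathbf G)^\perp=\mathcal S_\phi^\perp\), so \(\mathbf Q_\phi\mathbf G^\top=\mathbf G^\top\).
\end{itemize}
Also \(\mathbf Q_\phi\) is idempotent, so the reaction term satisfies \(\mathbf Q_\phi\mathcal R_\eta(\mathbf X)=\mathcal R_\eta(\mathbf X)\). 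Applying \(\mathbf Q_\phi\) to Eq.~\eqref{eq:hnrd_equation} therefore yields
\begin{equation}
\frac{d\mathbf Z}{dt}=-\mathbf G^\top\mathbf A_\theta(\mathbf X)\mathbf G\mathbf Z+\left[R_\theta(\mathbf X)+\tanh\left(\tau_\eta-\|\mathbf Z\|_F^2\right)\right]\mathbf Z .
\end{equation}
As a by-product, \(\frac{d}{dt}\mathbf\Pi_\phi\mathbf X=0\), so the null-mode component is frozen, consistent with the remark preceding Lemma~\ref{lem:bounded_reaction_coefficient}.

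\emph{Step 2: the scalar equation.} Since \(\mathbf X\) is \(C^1\), so is \(s\). Differentiating gives \(s'=2\langle\mathbf Z,\mathbf Z'\rangle_F\). By definition \eqref{eq:instantaneous_diffusion_dissipation}, \(R_\theta(\mathbf X)\,s=\langle\mathbf G\mathbf Z,\mathbf A_\theta\mathbf G\mathbf Z\rangle_F\), and this also holds trivially when \(\mathbf Z=0\). The diffusion dissipation \(-2\langle\mathbf G\mathbf Z,\mathbf A_\theta\mathbf G\mathbf Z\rangle_F\) therefore cancels exactly against the compensation term \(2R_\theta s\). We obtain the autonomous equation
\begin{equation}
s'(t)=2\,s(t)\tanh\left(\tau_\eta-s(t)\right),\qquad s(0)=\|\mathbf Q_\phi\mathbf X_0\|_F^2>0 .
\end{equation}
The only properties of \(\mathbf A_\theta\) that enter are through this cancellation. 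Lemma~\ref{lem:bounded_reaction_coefficient} is needed only for well-posedness, which Theorem~\ref{thm:hnrd_wellposedness} already supplies.

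\emph{Step 3: phase-line analysis.} The right-hand side \(f(s)=2s\tanh(\tau_\eta-s)\) is smooth, with equilibria \(s=0\) and \(s=\tau_\eta\). By uniqueness for this scalar ODE, a trajectory starting in \((0,\tau_\eta)\), at \(\tau_\eta\), or in \((\tau_\eta,\infty)\) stays there. In particular \(s(t)>0\) for all \(t\), so \(\mathbf Z(t)\neq0\) and the trajectory never hits the singular point of \(R_\theta\).
\begin{itemize}
\item On \((0,\tau_\eta)\) we have \(f>0\), so \(s\) is increasing and bounded above by \(\tau_\eta\).
\item On \((\tau_\eta,\infty)\) we have \(f<0\), so \(s\) is decreasing and bounded below by \(\tau_\eta\).
\end{itemize}
In either case \(s\) is monotone and bounded, hence converges to some \(s_\infty\). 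A standard argument shows \(f(s_\infty)=0\): otherwise \(s'\) would stay bounded away from zero near infinity, contradicting boundedness. Since \(s_\infty\ge\min\{s(0),\tau_\eta\}>0\), it follows that \(s_\infty=\tau_\eta\). For a rate, one can linearize near \(\tau_\eta\), where \(f'(\tau_\eta)=-2\tau_\eta<0\), to obtain exponential convergence.

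The main point requiring care is Step 1. One must check rigorously that the projected diffusion term lies in \(\mathcal S_\phi^\perp\), i.e.\ \(\mathbf Q_\phi\mathbf G^\top=\mathbf G^\top\), so that the cancellation in Step 2 is exact. One must also ensure the argument never evaluates \(R_\theta\) at \(\mathbf Z=0\); Step 3 guarantees this via positivity of \(s\).
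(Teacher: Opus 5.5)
Your proposal is correct and follows the same route as the paper: the compensation term $R_\theta(\mathbf X)\mathbf Q_\phi\mathbf X$ cancels the diffusion dissipation $-2\langle \mathbf G\mathbf Z,\mathbf A_\theta(\mathbf X)\mathbf G\mathbf Z\rangle_F$ exactly. This reduces everything to the scalar ODE $s'=2s\tanh(\tau_\eta-s)$, which is then handled by a phase-line argument. Your treatment of that last step is more careful than the paper's, which simply asserts that $s(t)$ is driven to $\tau_\eta$: you get invariance of $(0,\tau_\eta)$ and $(\tau_\eta,\infty)$ from uniqueness, use monotone bounded convergence, and identify the limit as a nonzero equilibrium.
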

The proof of Theorem~\ref{thm:hnrd_transverse_stabilization} is provided in Appendix~\ref{app:theorem7}. The theorem shows that the HNRD semiflow does not collapse toward the null-mode subspace. Instead, it stabilizes the transverse energy at the learnable level \(\tau_\eta\), thereby preserving nontrivial null-mode-free variation. This behavior sharply contrasts with pure diffusion, whose transverse energy decays to zero.

The stabilization of \(\|\mathbf Q_\phi\mathbf X(t)\|_F^2\) further implies preservation of hypergraph-structured variation. Indeed, since the Dirichlet energy is equivalent to the squared transverse norm on \(\mathcal S_\phi^\perp\), the non-collapse of the transverse component yields a two-sided asymptotic bound for \(\mathcal E_\phi(\mathbf X(t))\).
\begin{theorem}
\label{thm:hnrd_dirichlet_noncollapse}
Under the conditions of Lemma~\ref{lem:normalized_diagonal_modulation}, let \(\{\mathbf S_R(t)\}_{t\ge0}\) be the global continuous semiflow generated by Eq.~\eqref{eq:hnrd_equation}. Assume that \(\mathbf Q_\phi\mathbf X_0\neq0\). Then the null-mode-free Dirichlet energy along the HNRD semiflow satisfies
\begin{equation}
\begin{aligned}
\frac{\lambda_{H,\max}}{2}\tau_\eta&\ge \limsup_{t\to\infty}
\mathcal E_\phi(\mathbf S_R(t)\mathbf X_0)\\
&\ge \liminf_{t\to\infty}
\mathcal E_\phi(\mathbf S_R(t)\mathbf X_0)
\ge
\frac{\lambda_{H,2}}{2}\tau_\eta,
\label{eq:dirichlet_noncollapse_lower_bound}
\end{aligned}
\end{equation}
where \(
\lambda_{H,2}
:=
\min_{\mathbf Z\in\mathcal S_\phi^\perp,\ \|\mathbf Z\|_F=1}
\left\langle
\mathbf Z,\mathbf L_H\mathbf Z
\right\rangle_F
> 0,
\) and \(\lambda_{H,\max}
:=
\max_{\mathbf Z\in\mathcal S_\phi^\perp,\ \|\mathbf Z\|_F=1}
\left\langle
\mathbf Z,\mathbf L_H\mathbf Z
\right\rangle_F .\)
\end{theorem}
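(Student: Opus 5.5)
The plan is to reduce the statement to Theorem~\ref{thm:hnrd_transverse_stabilization} through a Rayleigh-quotient sandwich for \(\mathcal E_\phi\) on \(\mathcal S_\phi^\perp\). Write \(\mathbf X(t)=\mathbf S_R(t)\mathbf X_0\) and \(\mathbf Z(t)=\mathbf Q_\phi\mathbf X(t)\in\mathcal S_\phi^\perp\).

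The first step is a pointwise comparison. By the definitions of \(\lambda_{H,2}\) and \(\lambda_{H,\max}\) as minimum and maximum of \(\langle \mathbf Z,\mathbf L_H\mathbf Z\rangle_F\) over unit vectors in \(\mathcal S_\phi^\perp\), homogeneity gives, for every \(\mathbf Z\in\mathcal S_\phi^\perp\),
\[
\tfrac{\lambda_{H,2}}{2}\|\mathbf Z\|_F^2\le \tfrac12\langle \mathbf Z,\mathbf L_H\mathbf Z\rangle_F\le \tfrac{\lambda_{H,\max}}{2}\|\mathbf Z\|_F^2 .
\]
This holds trivially for \(\mathbf Z=0\). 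Applying it with \(\mathbf Z=\mathbf Z(t)\) yields
\[
\tfrac{\lambda_{H,2}}{2}\,s(t)\le\mathcal E_\phi(\mathbf X(t))\le\tfrac{\lambda_{H,\max}}{2}\,s(t)
\]
for all \(t\ge0\), where \(s(t)=\|\mathbf Q_\phi\mathbf X(t)\|_F^2\).

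The second step checks that \(\lambda_{H,2}>0\). The unit sphere of the finite-dimensional subspace \(\mathcal S_\phi^\perp\) is compact, so the minimum is attained at some \(\mathbf Z^*\). If the minimum were zero, positive semidefiniteness would give \(\mathbf L_H\mathbf Z^*=0\), so every column of \(\mathbf Z^*\) would lie in \(\ker\mathbf L_H=\operatorname{span}\{\phi\}\). That means \(\mathbf Z^*\in\mathcal S_\phi\), and since also \(\mathbf Z^*\in\mathcal S_\phi^\perp\), we get \(\mathbf Z^*=0\), which contradicts \(\|\mathbf Z^*\|_F=1\).

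The third step passes to the limit. Because \(\mathbf Q_\phi\mathbf X_0\neq0\), Theorem~\ref{thm:hnrd_transverse_stabilization} gives \(s(t)\to\tau_\eta\). Taking \(\liminf\) of the lower bound and \(\limsup\) of the upper bound then gives the stated chain, with the middle inequality \(\limsup\ge\liminf\) being trivial.

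The only part that needs care is the identification of the matrix-valued kernel \(\{\mathbf Z:\mathbf L_H\mathbf Z=0\}\) with \(\mathcal S_\phi\), which I handle column by column as above. Everything substantive comes from Theorem~\ref{thm:hnrd_transverse_stabilization}, which is assumed.
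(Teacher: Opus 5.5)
Your proposal is correct and follows the paper's proof: you sandwich \(\mathcal E_\phi\) between \(\tfrac{\lambda_{H,2}}{2}s(t)\) and \(\tfrac{\lambda_{H,\max}}{2}s(t)\) on \(\mathcal S_\phi^\perp\), then pass to \(\liminf\)/\(\limsup\) using \(s(t)\to\tau_\eta\) from Theorem~\ref{thm:hnrd_transverse_stabilization}. Your column-by-column compactness check that \(\lambda_{H,2}>0\) simply spells out what the paper states in one line as positive definiteness of \(\mathbf L_H\) on \(\mathcal S_\phi^\perp\).
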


The proof of Theorem~\ref{thm:hnrd_dirichlet_noncollapse} is provided in Appendix~\ref{app:theorem8}. This theorem shows that the proposed HNRD dynamics prevents the Dirichlet energy from decaying to zero. Therefore, the node representations do not collapse to the null-mode subspace in the long-time regime.

\subsection{Discrete HNRD Layer}

The continuous-time formulation also induces a discrete hypergraph neural layer through forward Euler discretization. Given a step size \(h>0\), Eq.~\eqref{eq:hnrd_equation} yields
\begin{equation}
\mathbf X^{k+1}
=
\mathbf X^k
+
h
\left[\mathcal F_\theta(\mathbf X^k)
+
\mathcal R_\eta(\mathbf X^k)
\right].
\label{eq:discrete_hnrd_layer}
\end{equation}
This layer can be interpreted as a learnable diffusion step corrected by a bounded reaction feedback. The diffusion term performs hypergraph-structured smoothing, while the reaction term preserves nontrivial transverse variation and prevents the progressive collapse of node representations in deep propagation. We next give a simple step-size stability condition for the discrete layer. 
\begin{theorem}
\label{thm:discrete_hnrd_stepsize_stability}
Consider the discrete HNRD layer in Eq.~\eqref{eq:discrete_hnrd_layer}. Assume that \(\mathbf Q_\phi\mathbf X^0\neq0\). If the step size satisfies
\begin{equation}
0<h<1,
%\frac{2}{1+\frac14\lambda_{\max}^2(\mathbf L_H)},
\label{eq:discrete_hnrd_stepsize_condition}
\end{equation}
then the transverse energy \(s_k:=\|\mathbf Q_\phi\mathbf X^k\|_F^2
\) is uniformly bounded along depth. More precisely, there exists a constant \(C_h>0\), depending on \(h\), \(\tau_\eta\), \(\lambda_{\max}(\mathbf L_H)\), and \(s_0\), such that
\begin{equation}
s_k\le C_h,
\quad
\forall k\ge0 .
\end{equation}
\end{theorem}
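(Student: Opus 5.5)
The plan is to derive an exact scalar recursion for the transverse energy $s_k$ and then close it with a threshold-plus-induction argument.

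\textbf{Reducing the layer to the transverse component.} Since $\ker(\mathbf G)=\mathcal S_\phi$ by Lemma~\ref{lem:normalized_diagonal_modulation}(iii), we have $\operatorname{range}(\mathbf G^\top)=\mathcal S_\phi^\perp$. Hence $\mathbf Q_\phi\mathbf G^\top=\mathbf G^\top$ and $\mathbf G\mathbf X=\mathbf G\mathbf Z$ with $\mathbf Z=\mathbf Q_\phi\mathbf X$. Write $\mathbf Z_k:=\mathbf Q_\phi\mathbf X^k$, $\mathbf M_k:=\mathbf G^\top\mathbf A_\theta(\mathbf X^k)\mathbf G$, $R_k:=R_\theta(\mathbf X^k)$ and $T_k:=\tanh(\tau_\eta-s_k)$. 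Applying $\mathbf Q_\phi$ to Eq.~\eqref{eq:discrete_hnrd_layer} then gives the closed transverse recursion
\begin{equation*}
\mathbf Z_{k+1}=\big[(1+hR_k+hT_k)\mathbf I-h\mathbf M_k\big]\mathbf Z_k .
\end{equation*}

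\textbf{Exact energy identity.} Expand $s_{k+1}=\|\mathbf Z_{k+1}\|_F^2$ and use $\langle\mathbf Z_k,\mathbf M_k\mathbf Z_k\rangle_F=R_k s_k$, which is just the definition \eqref{eq:instantaneous_diffusion_dissipation}. The cross terms cancel, and a short computation yields
\begin{equation*}
s_{k+1}=(1+hT_k)^2 s_k+h^2\big(\|\mathbf M_k\mathbf Z_k\|_F^2-R_k^2 s_k\big).
\end{equation*}
This is where the design of $R_\theta$ pays off: the compensation term exactly removes the first-order dissipation. What remains is a ``variance'' term, namely the variance of the spectral measure of $\mathbf M_k$ in the state $\mathbf Z_k/\|\mathbf Z_k\|_F$.

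Suppose the spectrum of $\mathbf M_k$ lies in $[0,\Lambda]$, where $\Lambda\le\lambda_{\max}(\mathbf L_H)$ by the same bound used in Lemma~\ref{lem:bounded_reaction_coefficient} together with $\mathbf A_\theta\preceq\mathbf I$. Then Popoviciu's variance inequality gives $\|\mathbf M_k\mathbf Z_k\|_F^2-R_k^2s_k\le\frac{\Lambda^2}{4}s_k$, and therefore
\begin{equation*}
s_{k+1}\le\Big[(1+hT_k)^2+\tfrac{h^2\Lambda^2}{4}\Big]s_k .
\end{equation*}

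\textbf{Threshold and induction.} For large $s_k$ we have $T_k\to-1$, so the bracket tends to $(1-h)^2+h^2\Lambda^2/4$. For the normalized hypergraph Laplacian, $\Lambda\le 2$, so this limit is at most $(1-h)^2+h^2=1-2h(1-h)$, which is $<1$ exactly when $0<h<1$. I expect this step to be the main obstacle: everything hinges on the spectral normalization $\lambda_{\max}(\mathbf L_H)\le 2$, and without it the condition would have to be weakened to something like $h<2/(1+\Lambda^2/4)$. I would first verify from the explicit form of $\mathbf G$, with its $1/\sqrt{d_v}$ and hyperedge-mean structure, that $\|\mathbf G\|^2\le 2$ holds.

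Given this, pick $\delta\in(0,1)$ with $(1-h\delta)^2+h^2<1$. Such a $\delta$ exists because this inequality is equivalent to $\delta>\big(1-\sqrt{1-h^2}\big)/h$, and the right-hand side is less than $1$ for $h\in(0,1)$. Set $s^*:=\tau_\eta+\operatorname{artanh}\delta$. There are then two cases for each $k$.
\begin{itemize}
\item If $s_k\ge s^*$, then $T_k\le-\delta$, so $s_{k+1}<s_k$.
\item If $s_k<s^*$, then $|T_k|\le1$ gives $s_{k+1}\le\big[(1+h)^2+h^2\big]s^*$.
\end{itemize}
An induction on $k$ then shows
\begin{equation*}
s_k\le C_h:=\max\{s_0,\ ((1+h)^2+h^2)\,s^*\}\quad\text{for all }k\ge0 .
\end{equation*}
This constant depends only on $h$, $\tau_\eta$, $\lambda_{\max}(\mathbf L_H)$ and $s_0$, as the theorem claims.
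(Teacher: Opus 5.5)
Your proposal is correct and follows the paper's proof essentially step for step. It uses the same transverse recursion, the same cross-term cancellation coming from the definition of $R_\theta$, the same Popoviciu-type variance bound $\frac{h^2}{4}\lambda_{\max}^2(\mathbf L_H)s_k$, and the same threshold-plus-induction argument; your explicit threshold $s^*=\tau_\eta+\operatorname{artanh}\delta$ simply makes the paper's $M$ concrete.

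On the step you flagged, the paper just asserts $\lambda_{\max}(\mathbf L_H)\le 2$. Your planned check would give $\|\mathbf G\|_2^2\le\max_v d_v^{-1}\sum_{e\ni v}\omega_e$, so the bound holds for unit weights (indeed with $1$ in place of $2$). For arbitrary weights $\omega_e$, however, it requires a weight normalization, which the paper leaves implicit.
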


The proof of Theorem~\ref{thm:discrete_hnrd_stepsize_stability} is provided in Appendix~\ref{app:theorem9}. This theorem shows that the explicit Euler discretization remains stable under a normalized residual step-size condition. The upper bound depends on the largest eigenvalue of the hypergraph Laplacian, which reflects the strongest possible smoothing strength induced by the hypergraph diffusion operator. Therefore, highly connected or strongly coupled hypergraphs require a smaller step size, while normalized hypergraph operators allow a relatively larger stable range.

In practice, the step size can either be fixed as a small constant or learned under the stability constraint. For example, one may parameterize \(h=\operatorname{Sigmoid}(\xi),\) where \(\xi\) is a learnable scalar. This parameterization keeps the discrete HNRD layer within the stable residual regime while allowing the effective propagation strength to be adapted from data.

Together with the continuous-time non-collapse results, Theorem~\ref{thm:discrete_hnrd_stepsize_stability} indicates that the proposed discrete layer preserves the stabilizing behavior of the HNRD. The discrete update in Eq.~\eqref{eq:discrete_hnrd_layer} therefore provides a stable implementation of the continuous HNRD dynamics. By stacking \(K\) such layers,
\begin{equation}
\mathbf X^0=\mathbf X_0,
\quad
\mathbf X^{k+1}
=
\operatorname{HNRDLayer}(\mathbf X^k),
\quad
k=0,\ldots,K-1,
\end{equation}
one obtains a deep hypergraph neural architecture that combines adaptive hypergraph diffusion with transverse-energy regulation. The final representation \(\mathbf X^K\) can then be passed to a task-specific readout or classifier.

\section{Experiments}
\subsection{Results on Benchmark Datasets}
\label{benchmark}
\begin{table*}[htbp]
  \centering
  \caption{Dataset Statistics Summary\label{tab:dataset_stats}}
  \setlength{\tabcolsep}{4pt}
  \begin{tabular}{l *{12}{S[table-format=5.0]}}
    \toprule
    {\textbf{Metric}} & {\textbf{Cora}} & {\textbf{Citeseer}} & {\textbf{Pubmed}} & {\textbf{Cora-CA}} & {\textbf{DBLP-CA}} & {\textbf{Zoo}}  & {\textbf{NTU2012}} & {\textbf{ModelNet40}} &{\textbf{Walmart}}&  {\textbf{Senate}} & {\textbf{House}}  \\
    \midrule
    $|V|$ & 2708 & 3312 & 19177 & 2708 & 41302 & 101 & 2012 & 12311 &88860& 282& 1290  \\
    $|E|$ & 1579 & 1079 & 7963 & 1072 & 22363 & 43 &  2012 & 12311 &69906& 315 & 340 \\
    \#features
    & 1433 & 3703 & 500 & 1433 & 1425 & 16 & 100  & 100 &100& 100 & 100  \\
    \#classes & 7 & 6 & 3 & 7 & 6 & 7  &  67 & 40 &11& 2 & 2\\
    \bottomrule
  \end{tabular}
  \vspace{0.2cm}
\end{table*}

\begin{table*}[ht]
\belowrulesep=0pt
\aboverulesep=0pt
\renewcommand{\arraystretch}{1.5}
\caption{Performance comparison on academic hypergraph datasets (Mean accuracy (\%) ± standard deviation), with the best result highlighted in \colorbox[RGB]{255,190,110}{\textbf{bold}} and the second-best result \colorbox[RGB]{255,223,175}{\underline{underlined}}.}
\label{tab:academic_results}
\centering
\begin{tabular}{|c|c|ccccc|c|}
\toprule
& \textbf{Models}& \textbf{Cora} & \textbf{Citeseer} & \textbf{Pubmed}&\textbf{Cora-CA}&\textbf{DBLP-CA} & \textbf{Rank}$\downarrow$\\ 
\midrule  
\multirow[c]{4}{*}{Graph Diffusion Models} & GRAND & {{78.85$\pm$1.56}} & 72.93$\pm$1.23 & {{86.19$\pm$0.35}} & {{82.02$\pm$1.21}} & {{88.82$\pm$0.45}} &15\\
& GRAND++ & 76.97$\pm$1.53 & 73.75$\pm$0.40 & 85.41$\pm$0.65 & 81.47$\pm$2.54 & 88.89$\pm$0.24 &16\\
& GREAD & 78.24$\pm$2.22 & 74.32$\pm$2.94 & 87.28$\pm$0.49 & 81.56$\pm$1.23 & 87.43$\pm$0.19 &14\\
& RDGNN & 76.23$\pm$2.36 & 72.98$\pm$2.17 & 85.68$\pm$1.78 & 78.29$\pm$1.17 & 88.92$\pm$0.62 &19\\
\midrule 
\multirow[c]{5}{*}{Standard HGNNs} & HGNN & 79.39$\pm$1.36 & 72.45$\pm$1.16 & 86.44$\pm$0.44 & 82.64$\pm$1.65 & 91.03$\pm$0.20 & 10\\   
& HyperGCN & 78.45$\pm$1.26 & 71.28$\pm$0.82 & 82.84$\pm$8.67 & 79.48$\pm$2.08 & 89.38$\pm$0.25 &18\\
& HCHA& 79.14$\pm$1.02 & 72.42$\pm$1.42 & 86.41$\pm$0.36 & 82.55$\pm$0.97 & 90.92$\pm$0.22 &12\\
& HNHN & 76.36$\pm$1.92 & 72.64$\pm$1.57 & 86.90$\pm$0.30 & 77.19$\pm$1.49 & 86.78$\pm$0.29& 20\\
& UNIGCNII& 78.81$\pm$1.05 & 73.05$\pm$2.21 & 88.25$\pm$0.40 & 83.60$\pm$1.14 & 91.69$\pm$0.19 &8\\
\midrule 
\multirow[c]{5}{*}{Expressive HGNNs} & AllSetTransformer& 78.59$\pm$1.47 & 73.08$\pm$1.20 & 88.72$\pm$0.37 & 83.63$\pm$1.47 & 91.53$\pm$0.23 & 7\\
& AllDeepSets & 76.88$\pm$1.80 & 70.83$\pm$1.63 & {88.75$\pm$0.33 }& 81.97$\pm$1.50 & 91.27$\pm$0.27 & 13\\
& ED-HNN & 80.31$\pm$1.35 & 73.70$\pm$1.38 & 
\cellcolor[RGB]{255,190,110}\textbf{89.03$\pm$0.53} & 83.97$\pm$1.55 & 
{\cellcolor[RGB]{255,223,175}\underline{91.90$\pm$0.19}} &4\\
& HyperGINE & 79.26$\pm$0.41 & 73.72$\pm$0.52 & 87.91$\pm$0.28 & 82.88$\pm$0.48 & 88.18$\pm$0.21 &9\\
& KHGNN& 80.67$\pm$0.76 & 74.80$\pm$1.10& 88.47$\pm$0.47 & 84.25$\pm$0.74 & 89.64$\pm$0.32 &6\\
\midrule 
\multirow[c]{5}{*}{Deep/Stable HGNNs}& Deep-HGCN & 78.04$\pm$1.17 & 72.78$\pm$1.81 & 85.49$\pm$0.29 & 82.64$\pm$2.32 &87.04$\pm$0.27  &17\\
& Implicit HNN & 77.01$\pm$1.65 & 74.51$\pm$1.45 & 85.93$\pm$0.36 & 83.67$\pm$1.64 & 89.47$\pm$0.52 &11\\
& FrameHGNN & 81.51$\pm$0.99 & 74.72$\pm$2.10 &88.73$\pm$0.42 & 85.18$\pm$0.69 & 89.37$\pm$0.35 &5\\
& HND &
81.76$\pm$1.12 & 75.51$\pm$1.37 & 88.52$\pm$0.30 &
{\cellcolor[RGB]{255,223,175}\underline{85.49$\pm$1.05}} & 91.71$\pm$0.25 &3\\
& RFHND & 
{\cellcolor[RGB]{255,223,175}\underline{81.83$\pm$1.48}} & \cellcolor[RGB]{255,190,110}\textbf{76.04$\pm$1.43} & 88.88$\pm$0.35&
85.40$\pm$1.05 & 91.20$\pm$0.41& \cellcolor[RGB]{255,223,175}\underline{2}\\
\midrule
Ours& HNRD & \cellcolor[RGB]{255,190,110}\textbf{81.97$\pm$1.07} & \cellcolor[RGB]{255,223,175}\underline{75.85}$\pm$1.25 &\cellcolor[RGB]{255,223,175}\underline{88.97$\pm$0.17}  & \cellcolor[RGB]{255,190,110}\textbf{85.73$\pm$0.88} & \cellcolor[RGB]{255,190,110}\textbf{92.11$\pm$0.12} & \cellcolor[RGB]{255,190,110}\textbf{1}\\
\bottomrule
\end{tabular}
\end{table*}

\begin{table*}[ht]
\belowrulesep=0pt
\aboverulesep=0pt
\renewcommand{\arraystretch}{1.5}
\caption{Performance comparison on real-world hypergraph datasets (Mean accuracy (\%) ± standard deviation), with the best result highlighted in \colorbox[RGB]{255,190,110}{\textbf{bold}} and the second-best result \colorbox[RGB]{255,223,175}{\underline{underlined}}.}
\label{tab:realworld_results}
\centering
\begin{tabular}{|c|c|cccccc|c|}
\toprule
& \textbf{Models}&  \textbf{Zoo}&\textbf{ NTU2012 }& \textbf{ModelNet40}&\textbf{Walmart}
& \textbf{Senate}&\textbf{House}&\textbf{Rank}$\downarrow$\\
\midrule  
\multirow[c]{4}{*}{Graph Diffusion Models} & GRAND & {{86.73$\pm$6.17}} & 84.11$\pm$2.05 & {{97.21$\pm$0.89}} & 75.97$\pm$0.27 & {{61.13$\pm$6.72}} & {{70.02$\pm$1.85}}&16\\
& GRAND++ & 96.30$\pm$3.65 & 87.52$\pm$2.06  & 95.79$\pm$0.44 & 75.95$\pm$0.48 & 58.92$\pm$3.76 & 68.77$\pm$2.79 &17\\
& GREAD & 97.92$\pm$2.20 & 89.39$\pm$0.48 & 96.93$\pm$0.36 & 77.64$\pm$0.22 & 61.11$\pm$4.62 & 70.06$\pm$3.53 &9\\
& RDGNN & 97.38$\pm$2.13 & 87.50$\pm$1.68 & 95.84$\pm$0.61 & 74.78$\pm$0.62 & 63.06$\pm$5.40 & 65.19$\pm$3.24 &15\\
\midrule 
\multirow[c]{5}{*}{Standard HGNNs} & HGNN & 92.50$\pm$4.58 & 87.72$\pm$1.35 & 95.44$\pm$0.33 &80.33$\pm$0.42& 48.59$\pm$4.52&61.39$\pm$2.96&18\\   
& HyperGCN & 93.62$\pm$4.18 & 56.36$\pm$4.86 & 75.89$\pm$5.26&81.05$\pm$0.59 & 42.45$\pm$3.67&48.32$\pm$2.93&20\\
& HCHA & 93.65$\pm$6.15 & 87.48$\pm$1.87& 94.48$\pm$0.28&80.33$\pm$0.80 & 48.62$\pm$4.41&61.36$\pm$2.53&19\\
& HNHN & 93.59$\pm$5.88 & 89.11$\pm$1.44&97.84$\pm$1.25&81.35$\pm$0.61& 50.93$\pm$6.33&67.80$\pm$2.59 & 12\\
& UNIGCNII & 93.65$\pm$4.37 & 89.30$\pm$1.33 & 98.07$\pm$0.23&81.12$\pm$0.67 & 49.30$\pm$4.25&61.70$\pm$3.37&11\\
\midrule 
\multirow[c]{5}{*}{Expressive HGNNs} & AllSetTransformer & 97.50$\pm$3.59& 88.69$\pm$1.24&98.20$\pm$0.20&81.38$\pm$0.58& 51.83$\pm$5.22 &69.33$\pm$2.20& 6\\
& AllDeepSets & 95.39$\pm$4.77 & 88.09$\pm$1.52&96.98$\pm$0.26&81.06$\pm$0.54& 48.17$\pm$5.67&67.82$\pm$2.40& 14\\
& ED-HNN & 97.57$\pm$7.22  & 88.67$\pm$0.92 & 97.83$\pm$0.33 & 79.98$\pm$0.84 & 64.79$\pm$5.14&72.45$\pm$2.28&8\\
& HyperGINE & 95.68$\pm$8.44  & 88.52$\pm$0.42 & 97.61$\pm$0.16 & 79.47$\pm$0.89 & 58.33$\pm$6.84 & 69.48$\pm$3.37 &13\\
& KHGNN & 97.03$\pm$5.47  & 89.60$\pm$1.64 & 98.33$\pm$0.14 & 81.69$\pm$0.52 & 59.81$\pm$4.92 & 68.53$\pm$2.69 &5\\
\midrule 
\multirow[c]{5}{*}{Deep/Stable HGNNs}& Deep-HGCN & 97.32$\pm$5.31 & 87.87$\pm$1.76 & 98.09$\pm$0.55 & 81.80$\pm$0.73 & 58.73$\pm$5.01 & 68.89$\pm$1.83 &10\\
& Implicit HNN & 97.79$\pm$6.09 & 88.75$\pm$1.24 & 97.96$\pm$0.53 & 80.84$\pm$0.45 & 66.40$\pm$4.11 & 67.22$\pm$1.52 &7\\
& FrameHGNN &97.30$\pm$3.79 &89.98$\pm$2.02& 98.41$\pm$0.18 & 80.46$\pm$0.65 &67.61$\pm$5.27& 72.82$\pm$2.22 &4\\
& HND & 
\cellcolor[RGB]{255,223,175}\underline{99.19$\pm$1.24} & 93.32$\pm$0.99 & 
93.32$\pm$0.99 & \cellcolor[RGB]{255,223,175}\underline{82.69$\pm$0.86} & \cellcolor[RGB]{255,223,175}\underline{70.00$\pm$4.44} & \cellcolor[RGB]{255,223,175}\underline{73.69$\pm$2.30}&3\\
& RFHND & 
97.70$\pm$2.14 & 
\cellcolor[RGB]{255,190,110}\textbf{98.48$\pm$0.20} & \cellcolor[RGB]{255,223,175}\underline{98.56$\pm$0.16}& 
82.40$\pm$1.40 & 68.12$\pm$4.98 & 73.52$\pm$2.25&\cellcolor[RGB]{255,223,175}\underline{2}\\
\midrule
& HNRD & \cellcolor[RGB]{255,190,110}\textbf{99.39$\pm$1.97} & \cellcolor[RGB]{255,223,175}\underline{97.98$\pm$0.47} & \cellcolor[RGB]{255,190,110}\textbf{98.74$\pm$0.22} & \cellcolor[RGB]{255,190,110}\textbf{83.35$\pm$0.65} & \cellcolor[RGB]{255,190,110}\textbf{70.28$\pm$4.59} & \cellcolor[RGB]{255,190,110}\textbf{75.86$\pm$1.85} &\cellcolor[RGB]{255,190,110}\textbf{1}\\
\bottomrule
\end{tabular}
\end{table*}
\textbf{Datasets.} We evaluate HNRD on a diverse set of hypergraph benchmarks, spanning academic citation/co-authorship networks and real-world applications.

For academic evaluation, we adopt five standard hypergraph benchmarks: Cora, Citeseer, and Pubmed, along with Cora-CA and DBLP-CA~\cite{yadati2019hypergcn}. Each node is a paper with bag-of-words features, labeled by research topics. These datasets cover representative citation and co-authorship relations commonly used in hypergraph learning. To assess generalization beyond such graphs, we further include datasets with distinct structural properties: Zoo from the UCI repository~\cite{dua2017uci} with mixed categorical attributes; ModelNet40~\cite{wu20153d} and NTU2012~\cite{chen2003visual} from 3D vision; Walmart~\cite{amburg2020clustering} capturing transaction records; and House~\cite{chodrow2021generative} and Senate~\cite{fowler2006connecting} representing legislative interactions. We follow standard hypergraph construction protocols and split each dataset into train/validation/test sets (50\%/25\%/25\%). All results are averaged over 20 independent random splits. For datasets without intrinsic node features, we adopt Gaussian random vectors as attributes. Detailed dataset statistics are summarized in Table~\ref{tab:dataset_stats}.

\textbf{Baselines.} To comprehensively evaluate HNRD, we compare it with representative baselines spanning graph neural diffusion models, conventional hypergraph neural networks, expressive higher-order architectures, and deep anti-oversmoothing models. Since HNRD is directly motivated by diffusion and reaction–diffusion dynamics, we first include graph-based diffusion models: GRAND~\cite{chamberlain2021grand}, GRAND++~\cite{thorpe2022grand++}, GREAD~\cite{choi2023gread}, and RDGNN~\cite{eliasof2024graph}. For these graph-based baselines, we convert hypergraphs to graphs using clique expansion strategies. Standard hypergraph baselines include spectral and message-passing models: HGNN~\cite{feng2019hypergraph}, HyperGCN~\cite{yadati2019hypergcn}, HCHA~\cite{bai2021hypergraph}, HNHN~\cite{dong2020hnhn}, and UniGCNII~\cite{huang2021unignn}. We further compare with models emphasizing representation power, including AllSetTransformer and AllDeepSets~\cite{chien2021you}, ED-HNN~\cite{wang2022equivariant}, HyperGINE and KHGNN~\cite{xie2025k}. To assess performance under deep propagation, we include several baselines specifically related to stability and oversmoothing mitigation: Deep-HGCN~\cite{chen2022preventing}, Implicit HNN~\cite{li2025implicit}, FrameHGNN~\cite{li2025deep}, HND~\cite{zhou2026hypergraph}, and RFHND~\cite{zhou2026tackling}. All models are implemented via PyTorch Geometric~\cite{fey2019fast} or official code, using identical splits and evaluation protocols.

\textbf{Experimental Settings.} We train HNRD using Adam and tune its hyperparameters via validation performance on each dataset. The search space includes learning rate, weight decay, input dropout, hidden dropout, hidden dimension, reaction--diffusion step size, and number of propagation layers. Specifically, we search learning rate over \{0.001,0.01\}, weight decay over \{0.001,0.01,0.05\}, and input dropout over \{0.05,0.1,0.2\}. Hidden dimension is chosen from \{16,32,64,128\}, and the number of propagation layers from {3,4}. For the reaction--diffusion step size, we search over \{0.1,0.3,0.5\}, and additionally parameterize it as a learnable scalar with a sigmoid constraint for numerical stability. All experiments are run on a single NVIDIA RTX A4000 GPU with a fixed random seed to ensure reproducibility. Source code is available at \href{https://github.com/CASZhouzhiheng/HNRD}{https://github.com/CASZhouzhiheng/HNRD}.

\textbf{Results.} The benchmark classification results are reported in Tables~\ref{tab:academic_results} and~\ref{tab:realworld_results}, where the final rank is computed by averaging the per-dataset ranks. HNRD achieves the best overall rank on both academic and real-world benchmarks. It ranks first on 3 out of 5 academic datasets, i.e., Cora, Cora-CA, and DBLP-CA, and second on Citeseer and Pubmed. On real-world datasets, it ranks first on 5 out of 6 datasets, including Zoo, ModelNet40, Walmart, Senate, and House, and second on NTU2012. Overall, HNRD obtains the best result on 8 out of 11 datasets and remains second-best on the other 3, demonstrating stable performance across diverse hypergraph domains. Compared with the strongest competing method on each dataset, HNRD achieves its largest gain of 2.17\% on House. Notably, HND and RFHND are strong hypergraph diffusion baselines tailored for stability and oversmoothing mitigation. Even compared with RFHND, the strongest overall competitor in terms of average rank, HNRD improves the average accuracy by 0.26\% on academic datasets and 1.14\% on real-world datasets. The comparison with graph-based diffusion models further highlights the advantage of incidence-level hypergraph dynamics. GRAND, GRAND++, GREAD, and RDGNN generally underperform hypergraph-specific methods, suggesting that pairwise graph expansions cannot fully preserve higher-order mixing patterns. Moreover, unlike HND and RFHND, which regulate diffusion without explicitly compensating transverse dissipation, HNRD introduces a reaction term that directly counteracts diffusion-induced collapse and stabilizes node-discriminative representations.

\subsection{Results on Synthetic Heterophilic Hypergraph Dataset}
\begin{table*}[htbp]
% \footnotesize
\centering
\caption{Model performance on synthetic hypergraphs with controlled heterophily $\alpha$.}
% \rowcolors{1}{}{blue!10} % 设置标题行背景色
\label{tab:synthetic_heterophily}
\begin{tabular}{cccccccc}
\toprule
\textbf{Model}& \bm{$\alpha = 1$} & \bm{$\alpha = 2$} & \bm{$\alpha = 3$} &\bm{$\alpha = 4$}& \bm{$\alpha = 5$} &\bm {$\alpha = 6$} &\bm {$\alpha = 7$} \\
\midrule
HGNN & 92.58 $\pm$ 0.57 & 87.40 $\pm$ 0.89 & 82.32 $\pm$ 0.84& 77.18 $\pm$ 1.08 & 70.06 $\pm$ 1.35 & 69.98 $\pm$ 0.85 & 68.41 $\pm$ 1.39 \\
HyperGCN  & 83.40 $\pm$ 1.67 & 78.13 $\pm$ 1.57 & 77.70 $\pm$ 1.01& 74.90 $\pm$ 1.31 & 50.15 $\pm$ 1.94 & 52.11 $\pm$ 0.75 & 49.40 $\pm$ 1.63 \\
UNIGCN    & 90.42 $\pm$ 1.04 & 82.12 $\pm$ 0.67 & 80.04 $\pm$ 0.98& 76.86 $\pm$ 1.10 & 71.50 $\pm$ 1.50 & 52.30 $\pm$ 1.18 & 49.20 $\pm$ 0.74 \\
ED-HNN     & 93.88 $\pm$ 0.67 & 89.89 $\pm$ 0.59 & 83.52 $\pm$ 0.85& 76.65 $\pm$ 1.21 & 72.18 $\pm$ 1.36 & 52.98 $\pm$ 0.78 & 49.65 $\pm$ 1.16 \\
KHGNN & 94.93 $\pm$ 0.38 & 90.05 $\pm$ 0.24 & 84.27 $\pm$ 0.96 & 79.37 $\pm$ 0.92 &73.26 $\pm$ 0.98  & 64.09 $\pm$ 0.77 & 53.78 $\pm$ 1.43 \\
Deep-HGCN & 95.04 $\pm$ 0.73 & 89.70 $\pm$ 0.63 & 83.04 $\pm$ 0.61 & 78.67 $\pm$ 1.11 & 72.94 $\pm$ 2.05 & 54.54 $\pm$ 1.08 & 53.26 $\pm$ 1.91 \\
Implicit HNN & 95.00 $\pm$ 0.24 & 88.69 $\pm$ 1.92 & 82.74 $\pm$ 1.29 & 77.67 $\pm$ 1.98 & 74.07 $\pm$ 1.34 & 65.98 $\pm$ 1.39 & 63.22 $\pm$ 1.38 \\
FrameHGNN & 95.11 $\pm$ 0.59 & 90.06 $\pm$ 0.64 & 84.08 $\pm$ 0.73 & 79.54 $\pm$ 0.75 & 74.39 $\pm$ 1.48 & 72.55 $\pm$ 0.87 & 70.40 $\pm$ 1.21 \\
HND & 95.34 $\pm$ 0.44 & \underline{90.50 $\pm$ 0.68} &\underline{84.33 $\pm$ 0.90}  & \underline{80.32 $\pm$ 0.92} & \underline{75.67 $\pm$ 1.16} &\underline{73.74 $\pm$ 0.94} & 73.28 $\pm$ 0.96 \\
RFHND   & \underline{95.63} $\pm$ 0.62 &90.17 $\pm$ 0.67 &84.21 $\pm$ 0.81& 77.48 $\pm$ 0.98 &74.26 $\pm$ 1.57 &73.51 $\pm$ 1.07 &\underline{73.46 $\pm$ 1.35}\\
\midrule 
HNRD  & \textbf{96.19 $\pm$ 0.29} & \textbf{90.85 $\pm$ 0.51} & \textbf{84.70 $\pm$ 0.55} & \textbf{80.86 $\pm$ 1.06} & \textbf{76.28 $\pm$ 1.11} & \textbf{74.37 $\pm$ 0.67} & \textbf{73.98 $\pm$ 1.19} \\
\bottomrule
\end{tabular}
\end{table*}
\textbf{Experimental Settings.}
We construct synthetic hypergraph datasets to evaluate the robustness of HNRD under controllable structural heterophily. The data generation follows the contextual hypergraph stochastic block model~\cite{ghoshdastidar2014consistency,deshpande2018contextual,chien2018community}. Each dataset contains two equally sized node classes, with 2,500 nodes per class. We randomly generate 1,000 hyperedges, each containing 15 nodes.

The heterophily level is controlled by the class composition of each hyperedge. For a hyperedge associated with class $i$, we sample $15-\alpha_i$ nodes from class $i$ and $\alpha_i$ nodes from the other class. We define the dataset-level heterophily level as \(\alpha=\min\{\alpha_1,\alpha_2\}.\) Smaller values of $\alpha$ indicate stronger homophily while larger values indicate stronger heterophily. In our experiments, we consider homophilic settings with $\alpha=1,2,3$ and heterophilic settings with $\alpha=4,\ldots,7$. We generate node attributes from label-conditioned Gaussian distributions with standard deviation $\sigma=1.0$, and compare HNRD against ten representative hypergraph baselines. Following the same protocol as in Section~\ref{benchmark}, we split nodes into train/validation/test sets (50\%/25\%/25\%) and report averaged results over 10 random splits for statistical robustness.

\textbf{Results.}
The results on synthetic heterophilic hypergraphs are reported in Table~\ref{tab:synthetic_heterophily}. HNRD achieves the best performance across all heterophily levels, from the strongly homophilic setting $\alpha=1$ to the highly heterophilic setting $\alpha=7$. As $\alpha$ increases, the class composition within each hyperedge becomes more mixed, and most baselines suffer a clear performance degradation. This trend is especially evident for HyperGCN, UNIGCN, and ED-HNN, whose accuracy drops sharply under strong heterophily. Compared with strong diffusion-based hypergraph baselines such as HND and RFHND, HNRD consistently obtains higher accuracy at every heterophily level. The improvement remains stable in both homophilic settings ($\alpha=1,2,3$) and heterophilic settings ($\alpha=4,\ldots,7$). In particular, under stronger heterophily, HNRD achieves $76.28\%$, $74.37\%$, and $73.98\%$ accuracy for $\alpha=5,6,7$, respectively, outperforming HND and RFHND in all three cases. These results indicate that the proposed reaction--diffusion mechanism is effective not only for mitigating oversmoothing, but also for preserving discriminative representations when hyperedges contain mixed-class nodes.

\subsection{Oversmoothing Analysis}
\begin{figure*}[ht]
\centering
{\includegraphics[scale=0.54]{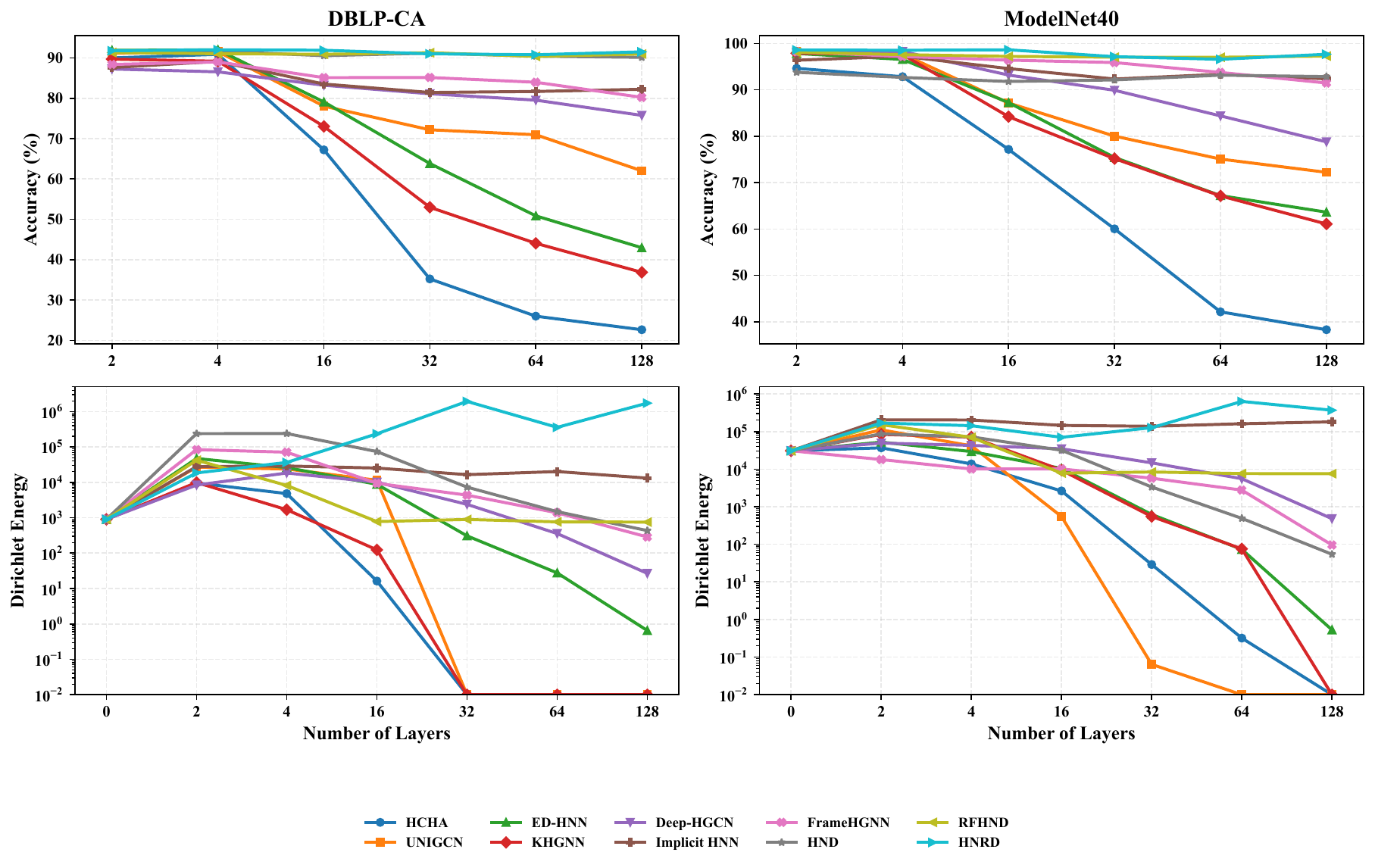}}
\caption{Depth-wise oversmoothing analysis on DBLP-CA and ModelNet40.}
\label{fig:oversmoothing}
\end{figure*}
The depth-wise oversmoothing analysis is shown in Fig.~\ref{fig:oversmoothing}. As the number of layers increases, several baseline models suffer from clear performance degradation, especially HCHA, KHGNN, and ED-HNN, whose accuracy drops substantially under deep propagation. In contrast, HNRD maintains consistently high accuracy on both DBLP-CA and ModelNet40 even when the depth increases to 128 layers, demonstrating strong robustness against deep propagation. Notably, HNRD also remains more stable than recent deep or stable hypergraph baselines, indicating that the proposed reaction--diffusion design is effective under very deep propagation regimes.

The Dirichlet energy curves further explain this behavior. For many baselines, the Dirichlet energy rapidly decreases as the depth increases, indicating that node representations progressively lose nontrivial structural variation. By contrast, HNRD preserves a stable and nonzero Dirichlet energy across depths. This observation is consistent with the proposed reaction--diffusion mechanism, which explicitly counteracts transverse diffusion dissipation and prevents representation collapse. These results jointly support our theoretical analysis, showing that HNRD mitigates oversmoothing by preserving null-mode-free structural variation during deep propagation.

\subsection{Ablation Study}
\begin{table}[h!]
\centering
\caption{Ablation results of HNRD on three representative datasets.}
\label{tab:ablation}
\begin{tabular}{lccc}
\toprule
\textbf{Variants} & \textbf{DBLP-CA} & \textbf{ModelNet40} & \textbf{Walmart} \\
\midrule 
HND & 91.71 $\pm$ 0.25 & 93.32 $\pm$ 0.99 & 82.69 $\pm$ 0.86 \\
w/o Compensation & 91.65 $\pm$ 0.21 & 97.99 $\pm$ 0.42 & 82.57 $\pm$ 0.46 \\
w/o Feedback & 91.89 $\pm$ 0.14 & 98.04 $\pm$ 0.19  &  82.89 $\pm$ 0.72 \\
\midrule 
\textbf{HNRD} & \textbf{92.11 $\pm$ 0.12} & \textbf{98.74 $\pm$ 0.22} & \textbf{83.35 $\pm$ 0.65} \\
\bottomrule
\end{tabular}
\end{table}
To assess the contribution of each key component in HNRD, we conduct ablation experiments by comparing the full model with several variants. Since the proposed reaction--diffusion mechanism consists of an instantaneous transverse-dissipation compensation term and a bounded feedback term, we consider the following settings:
\begin{itemize}
    \item \textbf{HND / Pure Diffusion:} 
    In this variant, the entire reaction term is removed, and the model reduces to pure hypergraph neural diffusion. The update is therefore driven only by the incidence-level diffusion operator.

    \item \textbf{w/o Compensation:} 
    In this variant, the instantaneous compensation term $R_\theta(X)Q_\phi X$ is removed from the reaction function. The model only keeps the bounded feedback term $\tanh(\tau_\eta-\|Q_\phi X\|_F^2)Q_\phi X$.

    \item \textbf{w/o Feedback:} 
    In this variant, the bounded feedback term 
    $\tanh(\tau_\eta-\|Q_\phi X\|_F^2)Q_\phi X$ is removed, while the compensation term $R_\theta(X)Q_\phi X$ is retained.
\end{itemize}

\textbf{Results.}
The ablation results are reported in Table~\ref{tab:ablation}. HNRD achieves the best performance on all three datasets, confirming the effectiveness of the complete reaction--diffusion design. Removing the entire reaction term, i.e., reducing the model to HND, leads to clear performance degradation, especially on ModelNet40, where the accuracy drops from $98.74\%$ to $93.32\%$. This verifies that pure hypergraph diffusion alone is insufficient to prevent representation degradation. Among the two reaction components, removing the compensation term causes a larger drop than removing the feedback term on all datasets. This indicates that the instantaneous compensation term plays a more fundamental role in counteracting diffusion-induced transverse dissipation, while the bounded feedback term further stabilizes the transverse energy. Overall, both components contribute to performance improvement, and their combination yields the strongest results.

\subsection{Parameter Analysis}
\begin{figure}[ht]
\centering
{\includegraphics[scale=0.39]{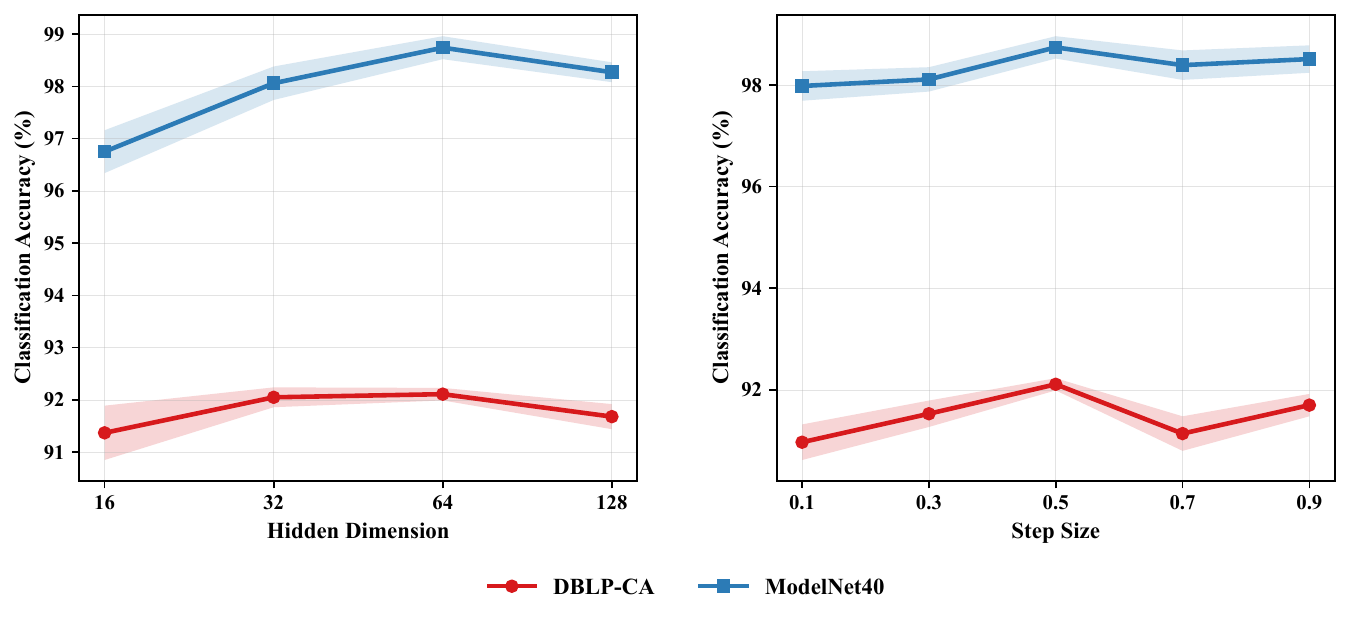}}
\caption{Parameter analysis of HNRD with respect to hidden dimension and step size on DBLP-CA and ModelNet40.}
\label{fig:parameter}
\end{figure}
Fig.~\ref{fig:parameter} presents the sensitivity analysis of HNRD with respect to the hidden dimension and step size on DBLP-CA and ModelNet40. For the hidden dimension, the performance generally improves as the dimension increases from 16 to 64, indicating that a larger representation space helps capture richer higher-order information. When the hidden dimension further increases to 128, the performance slightly decreases, suggesting that an excessively large hidden space may introduce redundancy or overfitting. Overall, the best or near-best performance is achieved at hidden dimension 64.

For the step size, HNRD remains relatively stable across different values, demonstrating that the model is not overly sensitive to this hyperparameter. Both datasets achieve strong performance around step size 0.5, while very small or overly large step sizes lead to slightly inferior results. These observations indicate that a moderate reaction--diffusion step size provides a better balance between hypergraph diffusion and reaction-based representation regulation.

\subsection{Visualization Analysis}
\begin{figure*}[ht]
\centering
{\includegraphics[scale=0.42]{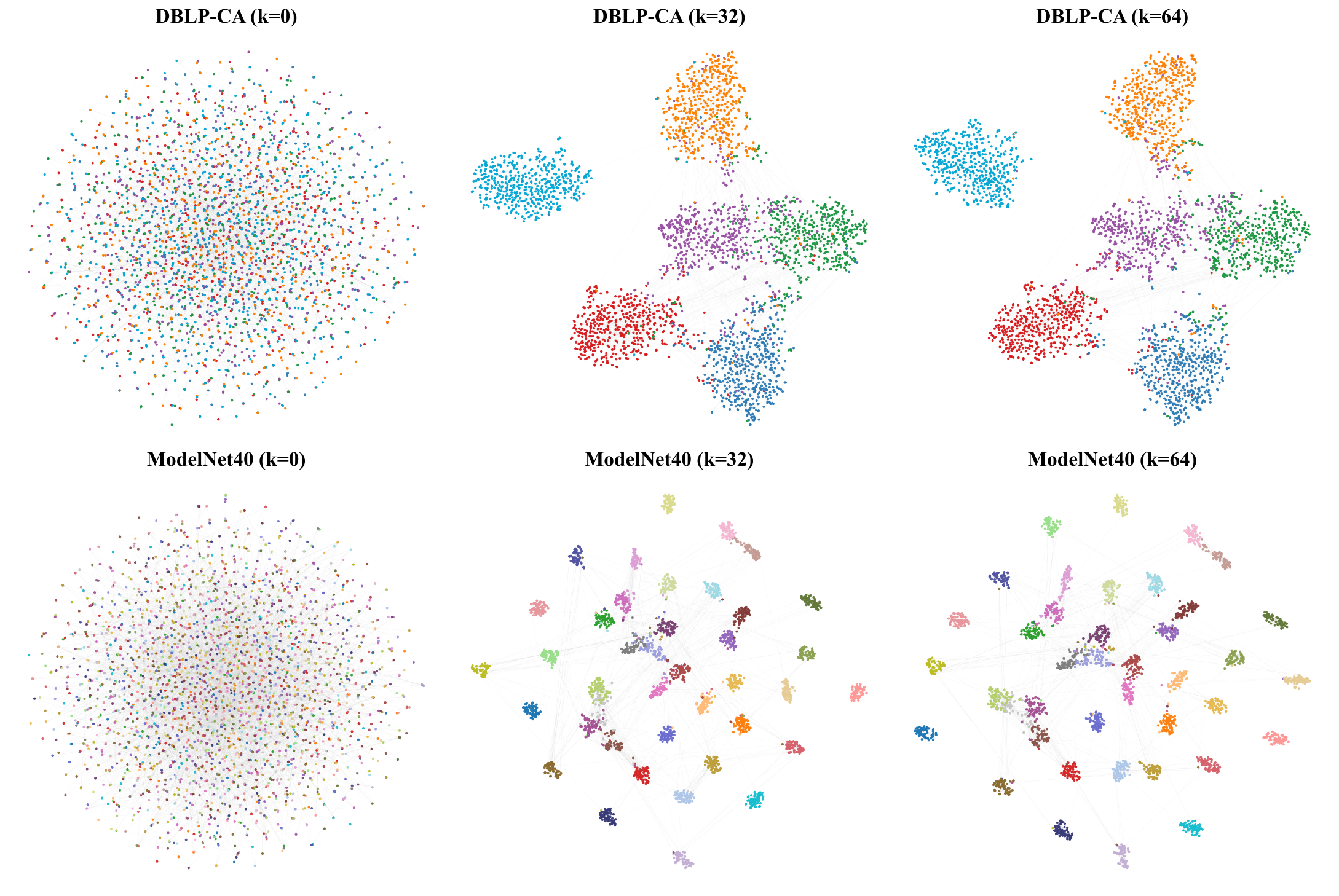}}
\caption{Visualization of node representation evolution on DBLP-CA and ModelNet40 under different propagation depths. Node embeddings are shown at $k=0$, $k=32$, and $k=64$, where colors denote node classes.}
\label{fig:visualization}
\end{figure*}
\begin{figure*}[htbp!]
\centering
{\includegraphics[scale=0.5]{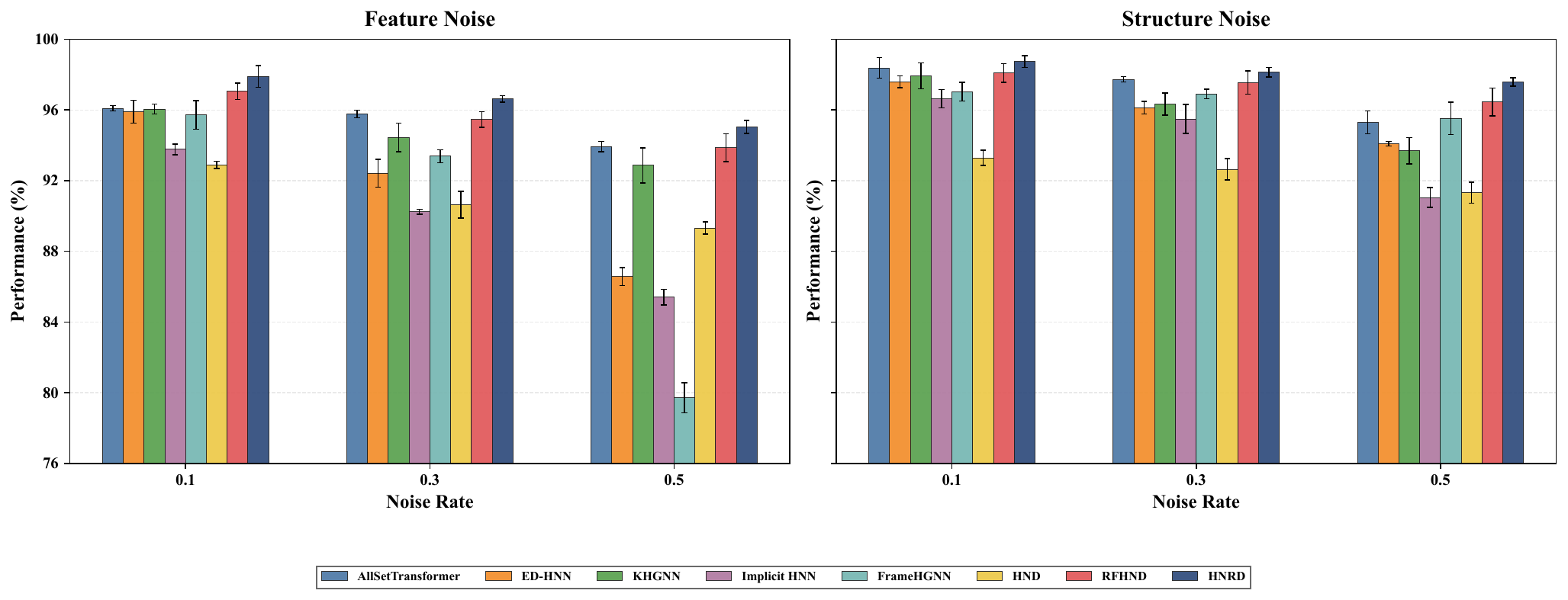}}
\caption{Robustness analysis under feature-level and structure-level perturbations on ModelNet40.}
\label{fig:robustness}
\end{figure*}
To examine the representation dynamics of HNRD under deep propagation, we visualize node embeddings on DBLP-CA and ModelNet40 at different propagation depths, i.e., $k=0$, $k=32$, and $k=64$. The high-dimensional embeddings are projected into a two-dimensional space using t-SNE, with node colors denoting class labels. As shown in Fig.~\ref{fig:visualization}, the initial embeddings at $k=0$ are highly mixed, with unclear class boundaries. As the propagation depth increases, HNRD gradually produces more compact and class-separable clusters. Even at $k=64$, the representations do not collapse into a homogeneous distribution, indicating that HNRD can maintain discriminative node-wise variation under deep propagation. This visualization qualitatively supports the ability of the proposed reaction--diffusion mechanism to alleviate oversmoothing.

\subsection{Robustness analysis}
To assess the robustness of HNRD under noisy input conditions, we conduct experiments on ModelNet40 by considering both feature-level and structure-level perturbations. For feature noise, node attributes are corrupted using mask noise that randomly sets a portion of feature values to zero. The masking ratio is controlled by the noise rate. For structure noise, we perturb the hypergraph by randomly removing existing hyperedges and inserting new hyperedges formed by randomly sampled node subsets, with the perturbation strength also controlled by the noise rate.

The robustness results on ModelNet40 are reported in Fig.~\ref{fig:robustness}. As the noise rate increases, most baseline methods suffer from clear performance degradation under both feature-level and structure-level perturbations. This indicates that noisy node attributes and disturbed hypergraph connectivity can substantially affect higher-order representation learning.

In contrast, HNRD consistently maintains stronger performance across different noise rates. The advantage is especially evident under larger perturbation levels, where the performance of several baselines drops more sharply. These results suggest that the proposed reaction--diffusion mechanism improves not only resistance to oversmoothing, but also robustness against noisy features and perturbed hypergraph structures.

\subsection{Runtime Comparison}
\begin{figure}[ht]
\centering
{\includegraphics[scale=0.4]{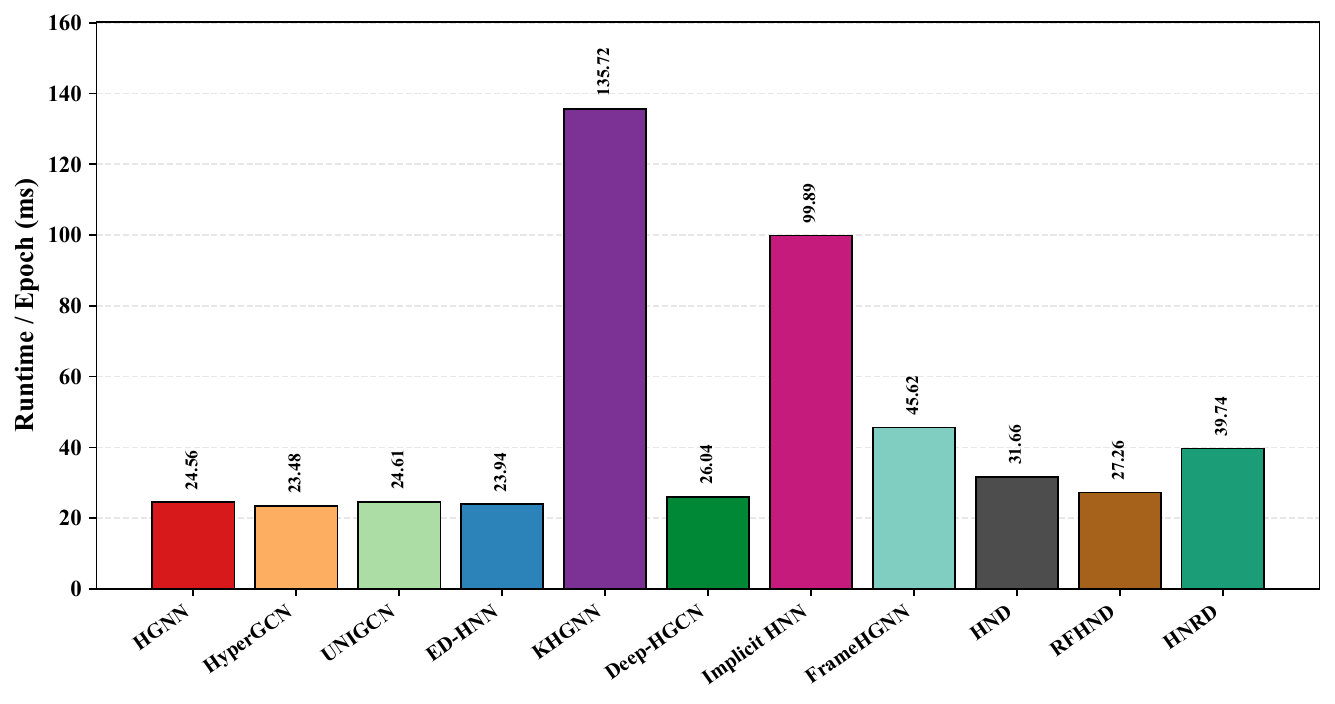}}
\caption{Runtime comparison of different HGNNs on DBLP-CA.}
\label{fig:runtime}
\end{figure}
Runtime efficiency is evaluated on DBLP-CA, as shown in Fig.~\ref{fig:runtime}. HNRD requires $39.74$ ms per epoch, incurring only a moderate overhead compared with HND and RFHND due to the additional reaction--diffusion regulation. Meanwhile, it is substantially more efficient than KHGNN and Implicit HNN, which require $135.72$ ms and $99.89$ ms per epoch, respectively, and is also faster than FrameHGNN. These results indicate that HNRD achieves a favorable balance between computational efficiency and reaction-based oversmoothing mitigation.
\section{Conclusion}
In this paper, we studied oversmoothing in HGNNs from a dynamical-systems perspective. By formulating hypergraph message passing as a learnable incidence-level diffusion equation, we showed that pure hypergraph neural diffusion intrinsically contracts the null-mode-free component of node representations and drives the associated Dirichlet energy to zero. This provides a principled explanation of oversmoothing in HGNNs as a transverse-energy dissipation process induced by higher-order diffusion. To address this issue, we proposed Hypergraph Neural Reaction--Diffusion (HNRD), which introduces a reaction term acting only on the transverse component. The reaction mechanism combines instantaneous compensation of diffusion-induced dissipation with bounded feedback toward a positive learnable energy level. We established global well-posedness, non-collapse of transverse variation, positive preservation of the null-mode-free Dirichlet energy, and a stable discrete implementation through forward Euler discretization. Extensive experiments on benchmark, real-world, and synthetic heterophilic hypergraph datasets demonstrated the effectiveness of HNRD. The model achieves strong predictive performance, remains stable under deep propagation, preserves nonzero Dirichlet energy, and shows robustness under perturbed settings with practical computational cost. These results suggest that incidence-level reaction--diffusion dynamics provide a principled route for designing deep hypergraph neural architectures that retain higher-order expressiveness while avoiding representation collapse.

\bibliographystyle{IEEEtran}
\bibliography{refer}

@article{zhou2026hypergraph,
  title={Hypergraph Neural Diffusion: A PDE-Inspired Framework for Hypergraph Message Passing},
  author={Zhou, Zhiheng and Zhou, Mengyao and Lin, Xixun and Qi, Xingqin and Yan, Guiying},
  journal={arXiv preprint arXiv:2604.10955},
  year={2026}
}

@book{hale2010asymptotic,
  title={Asymptotic behavior of dissipative systems},
  author={Hale, Jack K},
  number={25},
  year={2010},
  publisher={American Mathematical Soc.}
}

@book{temam2012infinite,
  title={Infinite-dimensional dynamical systems in mechanics and physics},
  author={Temam, Roger},
  year={2012},
  publisher={Springer Science \& Business Media}
}

@article{lindelof1894application,
  title={Sur l'application des m{\'e}thodes d'approximations successives {\`a} l'{\'e}tude des int{\'e}grales r{\'e}elles des {\'e}quations diff{\'e}rentielles ordinaires},
  author={Lindel{\"o}f, Ernest},
  journal={Journal de math{\'e}matiques pures et appliqu{\'e}es},
  volume={10},
  pages={117--128},
  year={1894}
}

@inproceedings{zhang2020hyper,
  title={Hyper-SAGNN: a self-attention based graph neural network for hypergraphs},
  author={Zhang, R and Zou, Y and Ma, J},
  booktitle={International Conference on Learning Representations (ICLR)},
  year={2020}
}

@inproceedings{feng2019hypergraph,
  title={Hypergraph neural networks},
  author={Feng, Yifan and You, Haoxuan and Zhang, Zizhao and Ji, Rongrong and Gao, Yue},
  booktitle={Proceedings of the AAAI conference on artificial intelligence},
  volume={33},
  number={01},
  pages={3558--3565},
  year={2019}
}

@article{yadati2019hypergcn,
  title={Hypergcn: A new method for training graph convolutional networks on hypergraphs},
  author={Yadati, Naganand and Nimishakavi, Madhav and Yadav, Prateek and Nitin, Vikram and Louis, Anand and Talukdar, Partha},
  journal={Advances in neural information processing systems},
  volume={32},
  year={2019}
}

@article{huang2021unignn,
  title={Unignn: a unified framework for graph and hypergraph neural networks},
  author={Huang, Jing and Yang, Jie},
  journal={arXiv preprint arXiv:2105.00956},
  year={2021}
}

@article{chien2021you,
  title={You are allset: A multiset function framework for hypergraph neural networks},
  author={Chien, Eli and Pan, Chao and Peng, Jianhao and Milenkovic, Olgica},
  journal={arXiv preprint arXiv:2106.13264},
  year={2021}
}

@inproceedings{heydari2022message,
  title={Message passing neural networks for hypergraphs},
  author={Heydari, Sajjad and Livi, Lorenzo},
  booktitle={International Conference on Artificial Neural Networks},
  pages={583--592},
  year={2022},
  organization={Springer}
}

@inproceedings{li2018deeper,
  title={Deeper insights into graph convolutional networks for semi-supervised learning},
  author={Li, Qimai and Han, Zhichao and Wu, Xiao-Ming},
  booktitle={Proceedings of the AAAI conference on artificial intelligence},
  volume={32},
  number={1},
  year={2018}
}

@article{oono2019graph,
  title={Graph neural networks exponentially lose expressive power for node classification},
  author={Oono, Kenta and Suzuki, Taiji},
  journal={arXiv preprint arXiv:1905.10947},
  year={2019}
}

@article{cai2020note,
  title={A note on over-smoothing for graph neural networks},
  author={Cai, Chen and Wang, Yusu},
  journal={arXiv preprint arXiv:2006.13318},
  year={2020}
}

@article{huang2020tackling,
  title={Tackling over-smoothing for general graph convolutional networks},
  author={Huang, Wenbing and Rong, Yu and Xu, Tingyang and Sun, Fuchun and Huang, Junzhou},
  journal={arXiv preprint arXiv:2008.09864},
  year={2020}
}

@article{zhao2022comprehensive,
  title={Comprehensive Analysis of Over-smoothing in Graph Neural Networks from Markov Chains Perspective},
  author={Zhao, Weichen and Wang, Chenguang and Han, Congying and Guo, Tiande},
  journal={arXiv preprint arXiv:2211.06605},
  year={2022}
}

@article{rong2019dropedge,
  title={Dropedge: Towards deep graph convolutional networks on node classification},
  author={Rong, Yu and Huang, Wenbing and Xu, Tingyang and Huang, Junzhou},
  journal={arXiv preprint arXiv:1907.10903},
  year={2019}
}

@article{zhao2019pairnorm,
  title={Pairnorm: Tackling oversmoothing in gnns},
  author={Zhao, Lingxiao and Akoglu, Leman},
  journal={arXiv preprint arXiv:1909.12223},
  year={2019}
}

@inproceedings{chen2020simple,
  title={Simple and deep graph convolutional networks},
  author={Chen, Ming and Wei, Zhewei and Huang, Zengfeng and Ding, Bolin and Li, Yaliang},
  booktitle={International conference on machine learning},
  pages={1725--1735},
  year={2020},
  organization={PMLR}
}

@inproceedings{liu2020towards,
  title={Towards deeper graph neural networks},
  author={Liu, Meng and Gao, Hongyang and Ji, Shuiwang},
  booktitle={Proceedings of the 26th ACM SIGKDD international conference on knowledge discovery \& data mining},
  pages={338--348},
  year={2020}
}

@inproceedings{xu2018representation,
  title={Representation learning on graphs with jumping knowledge networks},
  author={Xu, Keyulu and Li, Chengtao and Tian, Yonglong and Sonobe, Tomohiro and Kawarabayashi, Ken-ichi and Jegelka, Stefanie},
  booktitle={International conference on machine learning},
  pages={5453--5462},
  year={2018},
  organization={pmlr}
}

@inproceedings{choi2023gread,
  title={Gread: Graph neural reaction-diffusion networks},
  author={Choi, Jeongwhan and Hong, Seoyoung and Park, Noseong and Cho, Sung-Bae},
  booktitle={International conference on machine learning},
  pages={5722--5747},
  year={2023},
  organization={PMLR}
}

@article{eliasof2024graph,
  title={Graph neural reaction diffusion models},
  author={Eliasof, Moshe and Haber, Eldad and Treister, Eran},
  journal={SIAM Journal on Scientific Computing},
  volume={46},
  number={4},
  pages={C399--C420},
  year={2024},
  publisher={SIAM}
}

@article{chen2022preventing,
  title={Preventing over-smoothing for hypergraph neural networks},
  author={Chen, Guanzi and Zhang, Jiying and Xiao, Xi and Li, Yang},
  journal={arXiv preprint arXiv:2203.17159},
  year={2022}
}

@inproceedings{li2025deep,
  title={Deep hypergraph neural networks with tight framelets},
  author={Li, Ming and Fang, Yujie and Wang, Yi and Feng, Han and Gu, Yongchun and Bai, Lu and Lio, Pietro},
  booktitle={Proceedings of the AAAI Conference on Artificial Intelligence},
  volume={39},
  number={17},
  pages={18385--18392},
  year={2025}
}

@article{duta2023sheaf,
  title={Sheaf hypergraph networks},
  author={Duta, Iulia and Cassar{\`a}, Giulia and Silvestri, Fabrizio and Li{\`o}, Pietro},
  journal={Advances in Neural Information Processing Systems},
  volume={36},
  pages={12087--12099},
  year={2023}
}

@article{li2025implicit,
  title={Implicit hypergraph neural networks: A stable framework for higher-order relational learning with provable guarantees},
  author={Li, Xiaoyu and Tang, Guangyu and Jiang, Jiaojiao},
  journal={arXiv preprint arXiv:2508.09427},
  year={2025}
}

@article{lu2025hypergraph,
  title={Hypergraph neural diffusion networks},
  author={Lu, Fengcheng and Ng, Michael and Yip, Andy},
  journal={Neural Networks},
  pages={108271},
  year={2025},
  publisher={Elsevier}
}

@article{dong2020hnhn,
  title={Hnhn: Hypergraph networks with hyperedge neurons},
  author={Dong, Yihe and Sawin, Will and Bengio, Yoshua},
  journal={arXiv preprint arXiv:2006.12278},
  year={2020}
}

@inproceedings{wang2023hypergraph,
  title={From hypergraph energy functions to hypergraph neural networks},
  author={Wang, Yuxin and Gan, Quan and Qiu, Xipeng and Huang, Xuanjing and Wipf, David},
  booktitle={International Conference on Machine Learning},
  pages={35605--35623},
  year={2023},
  organization={PMLR}
}

@inproceedings{yan2024hypergraph,
  title={Hypergraph dynamic system},
  author={Yan, Jielong and Feng, Yifan and Ying, Shihui and Gao, Yue},
  booktitle={The twelfth international conference on learning representations},
  year={2024}
}

@article{choi2025hypergraph,
  title={Hypergraph Neural Sheaf Diffusion: A Symmetric Simplicial Set Framework for Higher-Order Learning},
  author={Choi, Seongjin and Kim, Gahee and Oh, Yong-Geun},
  journal={IEEE Access},
  year={2025},
  publisher={IEEE}
}

@inproceedings{zhao2025understanding,
  title={Understanding oversmoothing in diffusion-based GNNs from the perspective of operator semigroup theory},
  author={Zhao, Weichen and Wang, Chenguang and Wang, Xinyan and Han, Congying and Guo, Tiande and Yu, Tianshu},
  booktitle={Proceedings of the 31st ACM SIGKDD Conference on Knowledge Discovery and Data Mining V. 1},
  pages={2043--2054},
  year={2025}
}

@article{deidda2025rethinking,
  title={Rethinking oversmoothing in graph neural networks: A rank-based perspective},
  author={Deidda, Piero and Zhang, Kaicheng and Higham, Desmond and Tudisco, Francesco},
  journal={arXiv e-prints},
  pages={arXiv--2502},
  year={2025}
}

@inproceedings{thorpe2022grand++,
  title={GRAND++: Graph neural diffusion with a source term},
  author={Thorpe, Matthew and Nguyen, Tan Minh and Xia, Hedi and Strohmer, Thomas and Bertozzi, Andrea and Osher, Stanley and Wang, Bao},
  booktitle={International Conference on Learning Representations},
  year={2022}
}

@article{zhou2026tackling,
  title={Tackling Over-smoothing on Hypergraphs: A Ricci Flow-guided Neural Diffusion Approach},
  author={Zhou, Mengyao and Zhou, Zhiheng and Han, Xiao and Qi, Xingqin and Wang, Guanghui and Yan, Guiying},
  journal={arXiv preprint arXiv:2603.15696},
  year={2026}
}

@article{dua2017uci,
  title={UCI machine learning repository, 2017},
  author={Dua, Dheeru and Graff, Casey and others},
  journal={URL http://archive. ics. uci. edu/ml},
  volume={7},
  number={1},
  pages={62},
  year={2017}
}

@inproceedings{wu20153d,
  title={3d shapenets: A deep representation for volumetric shapes},
  author={Wu, Zhirong and Song, Shuran and Khosla, Aditya and Yu, Fisher and Zhang, Linguang and Tang, Xiaoou and Xiao, Jianxiong},
  booktitle={Proceedings of the IEEE conference on computer vision and pattern recognition},
  pages={1912--1920},
  year={2015}
}

@inproceedings{chen2003visual,
  title={On visual similarity based 3D model retrieval},
  author={Chen, Ding-Yun and Tian, Xiao-Pei and Shen, Yu-Te and Ouhyoung, Ming},
  booktitle={Computer graphics forum},
  pages={223--232},
  year={2003},
  organization={Wiley Online Library}
}

@article{chodrow2021generative,
  title={Generative hypergraph clustering: From blockmodels to modularity},
  author={Chodrow, Philip S and Veldt, Nate and Benson, Austin R},
  journal={Science Advances},
  volume={7},
  number={28},
  pages={eabh1303},
  year={2021},
  publisher={American Association for the Advancement of Science}
}

@article{fowler2006connecting,
  title={Connecting the congress: A study of cosponsorship networks},
  author={Fowler, James H},
  journal={Political analysis},
  volume={14},
  number={4},
  pages={456--487},
  year={2006},
  publisher={Cambridge University Press}
}

@article{fey2019fast,
  title={Fast graph representation learning with PyTorch Geometric},
  author={Fey, Matthias and Lenssen, Jan Eric},
  journal={arXiv preprint arXiv:1903.02428},
  year={2019}
}

@inproceedings{amburg2020clustering,
  title={Clustering in graphs and hypergraphs with categorical edge labels},
  author={Amburg, Ilya and Veldt, Nate and Benson, Austin},
  booktitle={Proceedings of the web conference 2020},
  pages={706--717},
  year={2020}
}

@inproceedings{xie2025k,
  title={K-hop hypergraph neural network: A comprehensive aggregation approach},
  author={Xie, Linhuang and Gao, Shihao and Liu, Jie and Yin, Ming and Jin, Taisong},
  booktitle={Proceedings of the AAAI Conference on Artificial Intelligence},
  volume={39},
  number={20},
  pages={21679--21687},
  year={2025}
}

@article{wang2022equivariant,
  title={Equivariant hypergraph diffusion neural operators},
  author={Wang, Peihao and Yang, Shenghao and Liu, Yunyu and Wang, Zhangyang and Li, Pan},
  journal={arXiv preprint arXiv:2207.06680},
  year={2022}
}

@article{bai2021hypergraph,
  title={Hypergraph convolution and hypergraph attention},
  author={Bai, Song and Zhang, Feihu and Torr, Philip HS},
  journal={Pattern Recognition},
  volume={110},
  pages={107637},
  year={2021},
  publisher={Elsevier}
}

@inproceedings{chamberlain2021grand,
  title={Grand: Graph neural diffusion},
  author={Chamberlain, Ben and Rowbottom, James and Gorinova, Maria I and Bronstein, Michael and Webb, Stefan and Rossi, Emanuele},
  booktitle={International conference on machine learning},
  pages={1407--1418},
  year={2021},
  organization={PMLR}
}

@article{deshpande2018contextual,
  title={Contextual stochastic block models},
  author={Deshpande, Yash and Sen, Subhabrata and Montanari, Andrea and Mossel, Elchanan},
  journal={Advances in Neural Information Processing Systems},
  volume={31},
  year={2018}
}

@article{ghoshdastidar2014consistency,
  title={Consistency of spectral partitioning of uniform hypergraphs under planted partition model},
  author={Ghoshdastidar, Debarghya and Dukkipati, Ambedkar},
  journal={Advances in Neural Information Processing Systems},
  volume={27},
  year={2014}
}

@inproceedings{chien2018community,
  title={Community detection in hypergraphs: Optimal statistical limit and efficient algorithms},
  author={Chien, I and Lin, Chung-Yi and Wang, I-Hsiang},
  booktitle={International conference on artificial intelligence and statistics},
  pages={871--879},
  year={2018},
  organization={PMLR}
}

\clearpage
\appendices
\section{Proof of Lemma~\ref{lem:normalized_diagonal_modulation}}
\label{app:Lemma1}
\begin{proof}[Proof of Lemma~\ref{lem:normalized_diagonal_modulation}]
First, by construction, \(\mathbf A_\theta(\mathbf X)\) is diagonal on the incidence space. Since each score \(a_\theta(\mathbf{X}_v(t), \mathbf{X}_e(t))\) is locally Lipschitz, and the softmax map is smooth on finite-dimensional spaces, the coefficients \(a_\theta(\mathbf{X}_v(t), \mathbf{X}_e(t))\) are locally Lipschitz on bounded subsets of \(\mathcal X\). Hence \(\mathbf A_\theta(\mathbf X)\) are locally Lipschitz with respect to \(\mathbf X\). This proves \textnormal{(i)}.

Next, for every \((e,v)\in\mathcal I\), we have
\begin{equation}
\begin{aligned}
1\ge a_\theta(\mathbf{X}_v(t), \mathbf{X}_e(t))&= (1-\varepsilon)\frac{\exp\big(r^{\theta}_{(e,v)}\big)}{\sum_{e' \ni v} \exp\big( r^{\theta}_{(e',v)} \big)}+\frac{\varepsilon}{d_v}\\
&\ge
\frac{\varepsilon}{|d_v|}
\ge
\frac{\varepsilon}{d_{\max}}
=
a_{\min}>0.
\end{aligned}
\end{equation}
Therefore, all diagonal entries of $\mathbf A_\theta(\mathbf X)$ are uniformly bounded above by $1$ and below by $a_{\min}$, which gives
\begin{equation}
\mathbf I_N\succeq \mathbf A_\theta(\mathbf X)\succeq a_{\min}\mathbf I_N.
\end{equation}
This proves \textnormal{(ii)}.

It remains to identify the null space of \(\mathbf G\). Since
\(
\mathbf G=\Omega_{\mathcal I}^{1/2}\mathbf P
\)
and \(\Omega_{\mathcal I}^{1/2}\) is a positive diagonal matrix~\cite{zhou2026hypergraph}, we have
\begin{equation}
\ker(\mathbf G)=\ker(\mathbf P).
\end{equation}
Let \(\mathbf f\in L(\mathcal V)\) satisfy \(\mathbf P\mathbf f=0\). By the definition of the hypergraph gradient, for every \((e,v)\in\mathcal I\),
\begin{equation}
\frac{\mathbf f(v)}{\sqrt{d_v}}
=
\frac{1}{|e|}
\sum_{u\in e}
\frac{\mathbf f(u)}{\sqrt{d_u}}.
\end{equation}
Thus the quantity \(\frac{\mathbf f(v)}{\sqrt{d_v}}
\) is equal to the average over every hyperedge containing \(v\). This implies that it is constant on each hyperedge. Since the hypergraph is connected, this constant propagates across the whole hypergraph. Hence there exists \(c\in\mathbb R\) such that
\begin{equation}
\frac{\mathbf f(v)}{\sqrt{d_v}}=c,
\qquad \forall v\in\mathcal V.
\end{equation}
Equivalently,
\(
\mathbf f=c,\;\mathbf D_v^{1/2}\mathbf 1=c\phi.
\)
Therefore,
\begin{equation}
\ker(\mathbf G)\subseteq \operatorname{span}\{\phi\}.
\end{equation}
The reverse inclusion follows directly from the gradient definition: if \(\mathbf f=c\phi\), then
\begin{equation}
\frac{\mathbf f(v)}{\sqrt{d_v}}=c
\end{equation}
for every \(v\in\mathcal V\), and hence \(\mathbf P\mathbf f=0\). Thus
\begin{equation}
\ker(\mathbf G)=\operatorname{span}\{\phi\}.
\end{equation}
For \(\mathbf X\in\mathbb R^{n\times d}\), the same argument applies columnwise, yielding
\begin{equation}
\ker(\mathbf G)=\mathcal S_\phi
=
\{\phi c:\ c\in\mathbb R^{1\times d}\}.
\end{equation}
This proves \textnormal{(iii)}.
\end{proof}

\section{Proof of Theorem~\ref{thm:null_mode_attraction}}
\label{app:Theorem3}
\begin{proof}[Proof of Theorem~\ref{thm:null_mode_attraction}]
We first prove the invariance of \(\mathcal S_\phi\). By Lemma~\ref{lem:normalized_diagonal_modulation}, we have
\(\ker(\mathbf G)=\mathcal S_\phi.\) For any \(\mathbf X_0\in\mathcal S_\phi\), it follows that \(\mathbf G\mathbf X_0=0.\)
Substituting this into the diffusion equation gives
\begin{equation}
\frac{\partial \mathbf X(t)}{\partial t}
=-\mathbf G^\top
\mathbf A_\theta(\mathbf X(t))
\mathbf G\mathbf X(t)=0
\end{equation}
whenever \(\mathbf X(t)\in\mathcal S_\phi\). Therefore, the solution starting from \(\mathbf X_0\in\mathcal S_\phi\) remains fixed:
\begin{equation}
\mathbf S_D(t)\mathbf X_0=\mathbf X_0,\quad t\ge0.
\end{equation}
Hence \(\mathcal S_\phi\) is invariant under the diffusion semiflow.

We next prove the exponential contraction of the transverse component. Let \(\mathbf Z(t):=\mathbf Q_\phi \mathbf X(t).
\) Since \(\mathcal S_\phi=\ker(\mathbf G)\), we have
\begin{equation}
\mathbf G\mathbf X(t)
=\mathbf G\mathbf Q_\phi\mathbf X(t)=
\mathbf G\mathbf Z(t).
\end{equation}
Moreover, the diffusion operator
\begin{equation}
\mathbf L_\theta(\mathbf X)
:=
\mathbf G^\top\mathbf A_\theta(\mathbf X)\mathbf G
\end{equation}
is symmetric positive semidefinite because \(\mathbf A_\theta(\mathbf X)\) is diagonal positive semidefinite. Since \(\mathbf L_\theta(\mathbf X)\) annihilates \(\mathcal S_\phi\), the transverse dynamics satisfies
\begin{equation}
\frac{d}{dt}\mathbf Z(t)
= \mathbf Q_\phi\frac{d}{dt}\mathbf X(t)
=-\mathbf L_\theta(\mathbf X(t))\mathbf Z(t).
\end{equation}

Define the transverse energy \(V(t):=\frac12\|\mathbf Z(t)\|_F^2.\)
Then
\begin{equation}
\begin{aligned}
\dot V(t)
&=
\left\langle
\mathbf Z(t),
\frac{d}{dt}\mathbf Z(t)
\right\rangle_F 
=
-\left\langle
\mathbf Z(t),
\mathbf G^\top\mathbf A_\theta(\mathbf X(t))\mathbf G\mathbf Z(t)
\right\rangle_F  \\
&=
-\left\langle
\mathbf G\mathbf Z(t),
\mathbf A_\theta(\mathbf X(t))\mathbf G\mathbf Z(t)
\right\rangle_F .
\end{aligned}
\end{equation}
By Lemma~\ref{lem:normalized_diagonal_modulation},
\(\mathbf A_\theta(\mathbf X(t))\succeq a_{\min}\mathbf I_N.
\) Hence
\begin{equation}
\dot V(t)
\le
-a_{\min}\|\mathbf G\mathbf Z(t)\|_F^2.
\end{equation}

It remains to use the spectral gap on \(\mathcal S_\phi^\perp\). Since \(\ker(\mathbf G)=\mathcal S_\phi,
\) the quantity 
\begin{equation}
\lambda_2:=\min_{\mathbf Z\in\mathcal S_\phi^\perp,\ \|\mathbf Z\|_F=1}\|\mathbf G\mathbf Z\|_F^2
\end{equation} 
is strictly positive. Indeed, if \(\lambda_2=0\), then there exists a nonzero \(\mathbf Z\in\mathcal S_\phi^\perp\) such that \(\mathbf G\mathbf Z=0\), which implies \(\mathbf Z\in\ker(\mathbf G)=\mathcal S_\phi\), contradicting \(\mathbf Z\in\mathcal S_\phi^\perp\) and \(\|\mathbf Z\|_F=1\). Therefore, \(\|\mathbf G\mathbf Z(t)\|_F^2\ge\lambda_2\|\mathbf Z(t)\|_F^2.\) Consequently,
\begin{equation}
\dot V(t)\le-a_{\min}\lambda_2|\mathbf Z(t)|_F^2
=-2\gamma_D V(t),
\end{equation} 
where \(\gamma_D:=a_{\min}\lambda_2>0.\)
By Gronwall's inequality,
\begin{equation}
V(t)\le e^{-2\gamma_D t}V(0).
\end{equation} 
Taking square roots yields
\begin{equation}
\begin{aligned}
\|\mathbf Q_\phi S_D(t)\mathbf X_0\|_F
&=\|\mathbf Z(t)\|_F\le e^{-\gamma_D t}\|\mathbf Z(0)\|_F\\
&=e^{-\gamma_D t}\|\mathbf Q_\phi\mathbf X_0\|_F.
\end{aligned}
\end{equation} 
Since \(\mathbf \Pi_\phi\) is the orthogonal projector onto \(\mathcal S_\phi\), we have
\begin{equation}
\operatorname{dist}(\mathbf S_D(t)\mathbf X_0,\mathcal S_\phi)
=\|\mathbf Q_\phi \mathbf S_D(t)\mathbf X_0\|_F.
\end{equation}
\end{proof}

\section{Proof of Theorem~\ref{thm:diffusion_dirichlet_decay}}
\label{app:Theorem4}
\begin{proof}[Proof of Theorem~\ref{thm:diffusion_dirichlet_decay}]
Let \(\mathbf Z(t):=\mathbf Q_\phi \mathbf S_D(t)\mathbf X_0.\) By definition,
\begin{equation}
\mathcal E_\phi(\mathbf S_D(t)\mathbf X_0)
=
\frac12
\left\langle
\mathbf Z(t),
\mathbf L_H\mathbf Z(t)
\right\rangle_F .
\end{equation}
Since \(\mathbf L_H\) is symmetric positive semidefinite,
\begin{equation}
\left\langle
\mathbf Z(t),
\mathbf L_H\mathbf Z(t)
\right\rangle_F
\le
\lambda_{\max}(\mathbf L_H)
\|\mathbf Z(t)\|_F^2.
\end{equation}
Therefore,
\begin{equation}
\mathcal E_\phi(\mathbf S_D(t)\mathbf X_0)
\le
\frac12
\lambda_{\max}(\mathbf L_H)
\|\mathbf Q_\phi \mathbf S_D(t)\mathbf X_0\|_F^2.
\end{equation}
By the transverse contraction estimate in Theorem~\ref{thm:null_mode_attraction}, we have \(
\|\mathbf Q_\phi \mathbf S_D(t)\mathbf X_0\|_F^2 \le e^{-2\gamma_D t}
\|\mathbf Q_\phi\mathbf X_0\|_F^2.\) Thus,
\begin{equation}
\mathcal E_\phi(\mathbf S_D(t)\mathbf X_0)
\le
\frac12
\lambda_{\max}(\mathbf L_H)
e^{-2\gamma_D t}
\|\mathbf Q_\phi\mathbf X_0\|_F^2,
\quad t\ge0.
\end{equation}
In particular, when \(t\to\infty\), we have
\(
\mathcal E_\phi(\mathbf S_D(t)\mathbf X_0)\to0.
\)

It remains to prove the equivalence between Dirichlet-energy decay and attraction to \(\mathcal S_\phi\). Since
\(\mathbf L_H\phi=0,\) and \(\ker(\mathbf L_H)=\operatorname{span}{\phi},\) the restriction of \(\mathbf L_H\) to \(\mathcal S_\phi^\perp\) is positive definite. Let
\begin{equation}
\lambda_{H,2}
:=
\min_{\mathbf Z\in\mathcal S_\phi^\perp,\;\|\mathbf Z\|_F=1}
\langle \mathbf Z,\mathbf L_H\mathbf Z\rangle_F>0
\end{equation}
denote its smallest positive eigenvalue on \(\mathcal S_\phi^\perp\). Then for every \(\mathbf X\in\mathcal X\),
\begin{equation}
\frac12\lambda_{H,2}\|\mathbf Q_\phi\mathbf X\|_F^2
\le\mathcal E_\phi(\mathbf X)
\le\frac12\lambda_{\max}(\mathbf L_H)
\|\mathbf Q_\phi\mathbf X\|_F^2.
\end{equation}
The upper bound implies that
\begin{equation}
\operatorname{dist}(\mathbf S_D(t)\mathbf X_0,\mathcal S_\phi)
=\|\mathbf Q_\phi \mathbf S_D(t)\mathbf X_0\|_F\to0
\end{equation}
implies \(\mathcal E_\phi(\mathbf S_D(t)\mathbf X_0)\to0.\)
Conversely, the lower bound implies that \(\mathcal E_\phi(\mathbf S_D(t)\mathbf X_0)\to0\), then \(\|\mathbf Q_\phi \mathbf S_D(t)\mathbf X_0\|_F\to0.\) 

Since \(\operatorname{dist}(\mathbf S_D(t)\mathbf X_0,\mathcal S_\phi)=\|\mathbf Q_\phi \mathbf S_D(t)\mathbf X_0\|_F,\)
we obtain
\begin{equation}
\mathcal E_\phi(\mathbf S_D(t)\mathbf X_0)\to0
\quad\Longleftrightarrow\quad
\operatorname{dist}(\mathbf S_D(t)\mathbf X_0,\mathcal S_\phi)\to0.
\end{equation}
\end{proof}

\section{Proof of Lemma~\ref{lem:bounded_reaction_coefficient}}
\label{app:Lemma5}
\begin{proof}[Proof of Lemma~\ref{lem:bounded_reaction_coefficient}]
Let \(\mathbf Z:=\mathbf Q_\phi\mathbf X .\) By definition,
\begin{equation}
R_\theta(\mathbf X)
=
\frac{
\left\langle
\mathbf G\mathbf Z,
\mathbf A_\theta(\mathbf X)\mathbf G\mathbf Z
\right\rangle_F
}{
\|\mathbf Z\|_F^2
}.
\end{equation}
Since \(\mathbf A_\theta(\mathbf X)\succeq0\), we have
\(
\left\langle
\mathbf G\mathbf Z,
\mathbf A_\theta(\mathbf X)\mathbf G\mathbf Z
\right\rangle_F
\ge0.
\)
Therefore, \(R_\theta(\mathbf X)\ge0.\)

Next, because the normalized diffusion modulation \(\mathbf A_\theta(\mathbf X)\preceq \mathbf I_N,\) we obtain \(
\left\langle
\mathbf G\mathbf Z,
\mathbf A_\theta(\mathbf X)\mathbf G\mathbf Z
\right\rangle_F
\le
\|\mathbf G\mathbf Z\|_F^2.
\) Since \(\mathbf L_H=\mathbf G^\top\mathbf G\), it follows that
\(
\|\mathbf G\mathbf Z\|_F^2
=
\left\langle
\mathbf Z,
\mathbf L_H\mathbf Z
\right\rangle_F .
\)
By the Rayleigh quotient bound,
\begin{equation}
\left\langle
\mathbf Z,
\mathbf L_H\mathbf Z
\right\rangle_F
\le
\lambda_{\max}(\mathbf L_H)\|\mathbf Z\|_F^2.
\end{equation}
For matrix-valued node representations, this identity is understood columnwise. Writing
\(\mathbf Z=[\mathbf z_1,\ldots,\mathbf z_d]\), we have
\begin{equation}
\left\langle
\mathbf Z,
\mathbf L_H\mathbf Z
\right\rangle_F
=
\sum_{j=1}^d
\mathbf z_j^\top \mathbf L_H \mathbf z_j .
\end{equation}
By the Rayleigh quotient bound for the symmetric positive semidefinite matrix \(\mathbf L_H\),
\begin{equation}
\mathbf z_j^\top \mathbf L_H \mathbf z_j
\le
\lambda_{\max}(\mathbf L_H)\|\mathbf z_j\|_2^2,
\quad j=1,\ldots,d .
\end{equation}
Summing over all feature channels yields
\begin{equation}
\left\langle
\mathbf Z,
\mathbf L_H\mathbf Z
\right\rangle_F
\le
\lambda_{\max}(\mathbf L_H)\|\mathbf Z\|_F^2 .
\end{equation}
Thus, we have
\begin{equation}
R_\theta(\mathbf X)
\le
\frac{
\lambda_{\max}(\mathbf L_H)\|\mathbf Z\|_F^2
}{
\|\mathbf Z\|_F^2
}
\le
\lambda_{\max}(\mathbf L_H).
\end{equation}
which implies that
\(
0\le R_\theta(\mathbf X)\le \lambda_{\max}(\mathbf L_H).
\)

Finally, since the hyperbolic tangent is uniformly bounded,
\(
-1\le
\tanh
\left(
\tau_\eta-\|\mathbf Q_\phi\mathbf X\|_F^2
\right)
\le1,
\)
we have
\[
-1
\le
R_\theta(\mathbf X)
+
\tanh
\left(
\tau_\eta-\|\mathbf Q_\phi\mathbf X\|_F^2
\right)
\le
\lambda_{\max}(\mathbf L_H)+1.
\]
Therefore, the scalar reaction coefficient is uniformly bounded. 
\end{proof}

\section{Proof of Theorem~\ref{thm:hnrd_wellposedness}}
\label{app:Theorem6}
\begin{proof}[Proof of Theorem~\ref{thm:hnrd_wellposedness}]
Define the HNRD vector field by
\begin{equation}
\mathcal H_{\theta,\eta}(\mathbf X)
:=-\mathbf G^\top
\mathbf A_\theta(\mathbf X)
\mathbf G\mathbf X
+\mathcal R_\eta(\mathbf X).
\end{equation}
Then Eq.~\eqref{eq:hnrd_equation} can be written as
\begin{equation}
\frac{d\mathbf X(t)}{dt}
=\mathcal H_{\theta,\eta}(\mathbf X(t)).
\end{equation}

We first show local well-posedness. By Lemma~\ref{lem:normalized_diagonal_modulation}, the diffusion modulation \(\mathbf A_\theta(\mathbf X)\) is locally Lipschitz with respect to \(\mathbf X\). Hence the diffusion field \(\mathbf X\mapsto
-\mathbf G^\top\mathbf A_\theta(\mathbf X)
\mathbf G\mathbf X\) is locally Lipschitz on the finite-dimensional phase space \(\mathcal X\). Moreover, the map \(R_\theta(\mathbf X)\) is locally Lipschitz. Since \(\tanh(\cdot)\), \(\|\mathbf Q_\phi\mathbf X\|_F^2\), and \(\mathbf Q_\phi\mathbf X\) are also locally Lipschitz, the reaction field \(\mathcal R_\eta(\mathbf X)\) is locally Lipschitz. Therefore, the full vector field \(\mathcal H_{\theta,\eta}\) is locally Lipschitz on \(\mathcal X\).

By the Picard--Lindelöf theorem~\cite{lindelof1894application}, for every initial condition \(\mathbf X_0\in\mathcal X\), there exists a unique maximal classical solution
\begin{equation}
\mathbf X(t;\mathbf X_0)\in C^1([0,T_{\max});\mathcal X).
\end{equation}

It remains to show that \(T_{\max}=+\infty\). By the normalized construction of \(\mathbf A_\theta\), we have
\(
0\preceq \mathbf A_\theta(\mathbf X)\preceq \mathbf I_N .
\)
Thus,
\begin{equation}
\|
\mathbf G^\top
\mathbf A_\theta(\mathbf X)
\mathbf G\mathbf X
\|_F\le\|\mathbf G\|_2^2\|\mathbf X\|_F .
\end{equation}
On the other hand, Lemma~\ref{lem:bounded_reaction_coefficient} gives
\begin{equation}
\|
R_\theta(\mathbf X)
+
\tanh
\left(
\tau_\eta-\|\mathbf Q_\phi\mathbf X\|_F^2
\right)
\|
\le
\lambda_{\max}(\mathbf L_H)+1 .
\end{equation}
Therefore, we have
\begin{equation}
\|\mathcal R_\eta(\mathbf X)\|_F
\le
\left(
\lambda_{\max}(\mathbf L_H)+1
\right)
\|\mathbf X\|_F .
\end{equation}
Combining the above estimates yields the linear growth bound
\begin{equation}
\|\mathcal H_{\theta,\eta}(\mathbf X)\|_F
\le C\|\mathbf X\|_F,
\end{equation}
where
\(
C:=\|\mathbf G\|_2^2+\lambda_{\max}(\mathbf L_H)+1 .
\)
Along the maximal solution, we have
\begin{equation}
\frac{d}{dt}\|\mathbf X(t)\|_F
\le\|\dot{\mathbf X}(t)\|_F
=\|\mathcal H_{\theta,\eta}(\mathbf X(t))\|_F
\le C\|\mathbf X(t)\|_F .
\end{equation}
By Gronwall's inequality,
\begin{equation}
\|\mathbf X(t)\|_F
\le e^{Ct}\|\mathbf X_0\|_F,
\quad
t\in[0,T_{\max}) .
\end{equation}
Hence the solution cannot blow up in finite time. By the standard blow-up alternative for finite-dimensional ODEs, we must have \(T_{\max}=+\infty .\)

Therefore, Eq.~\eqref{eq:hnrd_equation} admits a unique global classical solution for every \(\mathbf X_0\in\mathcal X\). Finally, uniqueness and continuous dependence on initial data imply that the solution operator
\begin{equation}
S_R(t)\mathbf X_0:=\mathbf X(t;\mathbf X_0),
\quad t\ge0,
\end{equation}
defines a global continuous semiflow \(\{S_R(t)\}_{t\ge0}\) on \(\mathcal X\).     
\end{proof}

\section{Proof of Theorem~\ref{thm:hnrd_transverse_stabilization}}
\label{app:theorem7}
\begin{proof}[Proof of Theorem~\ref{thm:hnrd_transverse_stabilization}]
Let \(\mathbf X(t):=\mathbf S_R(t)\mathbf X_0\). Since \(\mathbf Q_\phi\mathbf X_0\neq0\), we have \(s(0)>0\). The HNRD equation can be written as
\begin{equation}
\begin{aligned}
\frac{d\mathbf X(t)}{dt}
&=
-\mathbf G^\top
\mathbf A_\theta(\mathbf X(t))
\mathbf G\mathbf X(t)\\
&+
\left[
R_\theta(\mathbf X(t))
+
\tanh(\tau_\eta-s(t))
\right]
\mathbf Z(t).
\end{aligned}
\end{equation}
Because \(\ker(\mathbf G)=\mathcal S_\phi\), we have
\(\mathbf G\mathbf X(t)=\mathbf G\mathbf Q_\phi\mathbf X(t)
=\mathbf G\mathbf Z(t).\) Moreover, the diffusion term belongs to the transverse subspace, and the reaction term is also proportional to \(\mathbf Z(t)\). Hence,
\begin{equation}
\frac{d}{dt}s(t)
=
2\left\langle
\mathbf Z(t),
\frac{d}{dt}\mathbf Z(t)
\right\rangle_F .
\end{equation}
Substituting the HNRD dynamics gives
\begin{equation}
\begin{aligned}
\frac{d}{dt}s(t)
&=-2\left\langle
\mathbf G\mathbf Z(t),
\mathbf A_\theta(\mathbf X(t))
\mathbf G\mathbf Z(t)
\right\rangle_F  \\
&+2\left[R_\theta(\mathbf X(t))
+\tanh(\tau_\eta-s(t))\right]
\|\mathbf Z(t)\|_F^2 .
\end{aligned}
\end{equation}
By the definition of \(R_\theta\),
\begin{equation}
R_\theta(\mathbf X(t))\|\mathbf Z(t)\|_F^2
=\left\langle
\mathbf G\mathbf Z(t),
\mathbf A_\theta(\mathbf X(t))
\mathbf G\mathbf Z(t)
\right\rangle_F .
\end{equation}
Therefore, the diffusion dissipation is exactly compensated, and we obtain
\begin{equation}
\frac{d}{dt}s(t)=2\tanh(\tau_\eta-s(t))s(t).
\end{equation}

We now analyze this scalar equation. Since \(s(0)>0\), the solution remains positive for all \(t\ge0\). If \(0<s(t)<\tau_\eta\), then \(\tanh(\tau_\eta-s(t))>0,\) so \(s(t)\) increases. If \(s(t)>\tau_\eta\), then
\(\tanh(\tau_\eta-s(t))<0,\) so \(s(t)\) decreases. Thus \(s(t)\) is driven toward the unique positive equilibrium \(s=\tau_\eta\). Consequently,
\begin{equation}
\lim_{t\to\infty}s(t)=\|\mathbf Q_\phi \mathbf S_R(t)\mathbf X_0\|_F^2=\tau_\eta.
\end{equation}

Finally, since \(\mathbf Q_\phi\) is the orthogonal projector onto \(\mathcal S_\phi^\perp\), we have
\(\operatorname{dist}\left(\mathbf S_R(t)\mathbf X_0,
\mathcal S_\phi\right)=\|\mathbf Q_\phi \mathbf S_R(t)\mathbf X_0\|_F .\)
Therefore,
\begin{equation}
\lim_{t\to\infty}
\operatorname{dist}^2
\left(
\mathbf S_R(t)\mathbf X_0,
\mathcal S_\phi
\right)=\tau_\eta .
\end{equation} 
\end{proof}

\section{Proof of Theorem~\ref{thm:hnrd_dirichlet_noncollapse}}
\label{app:theorem8}
\begin{proof}[Proof of Theorem~\ref{thm:hnrd_dirichlet_noncollapse}]
Since \(Q_\phi\) is the orthogonal projector onto \(\mathcal S_\phi^\perp\), we have \(\mathbf Z(t)\in\mathcal S_\phi^\perp .\) 

Because \(\mathbf L_H\) is symmetric positive semidefinite and satisfies \(\ker(\mathbf L_H)=\operatorname{span}\{\phi\},\) its restriction to \(\mathcal S_\phi^\perp\) is positive definite. Therefore, for every
\(\mathbf Z\in\mathcal S_\phi^\perp\),
\begin{equation}
\lambda_{H,2}\|\mathbf Z\|_F^2
\le
\left\langle
\mathbf Z,
\mathbf L_H\mathbf Z
\right\rangle_F
\le
\lambda_{H,\max}\|\mathbf Z\|_F^2 .
\end{equation}
Applying this estimate to \(\mathbf Z(t)=\mathbf Q_\phi \mathbf S_R(t)\mathbf X_0\), we obtain
\begin{equation}
\begin{aligned}
\frac{\lambda_{H,2}}{2}
\|\mathbf Q_\phi \mathbf S_R(t)\mathbf X_0\|_F^2
&\le
\mathcal E_\phi(\mathbf S_R(t)\mathbf X_0)\\
&\le
\frac{\lambda_{H,\max}}{2}
\|\mathbf Q_\phi \mathbf S_R(t)\mathbf X_0\|_F^2 .
\end{aligned}
\end{equation}

By Theorem~\ref{thm:hnrd_transverse_stabilization}, when \(t\to\infty\), we have \(\|\mathbf Q_\phi \mathbf S_R(t)\mathbf X_0\|_F^2\to\tau_\eta.\)
Taking the lower limit and upper limit in the above two-sided estimate gives
\begin{equation}
\liminf_{t\to\infty}
\mathcal E_\phi(\mathbf S_R(t)\mathbf X_0)
\ge
\frac{\lambda_{H,2}}{2}\tau_\eta
\end{equation}
and
\begin{equation}
\limsup_{t\to\infty}
\mathcal E_\phi(\mathbf S_R(t)\mathbf X_0)
\le
\frac{\lambda_{H,\max}}{2}\tau_\eta .
\end{equation}
Combining these two inequalities yields
\begin{equation}
\begin{aligned}
\frac{\lambda_{H,\max}}{2}\tau_\eta
&\ge
\limsup_{t\to\infty}
\mathcal E_\phi(\mathbf S_R(t)\mathbf X_0)\\
&\ge
\liminf_{t\to\infty}
\mathcal E_\phi(\mathbf S_R(t)\mathbf X_0)
\ge
\frac{\lambda_{H,2}}{2}\tau_\eta .
\end{aligned}
\end{equation}
\end{proof}

\section{Proof of Theorem~\ref{thm:discrete_hnrd_stepsize_stability}}
\label{app:theorem9}
\begin{proof}[Proof of Theorem~\ref{thm:discrete_hnrd_stepsize_stability}]
Let \(\mathbf Z_k:=\mathbf Q_\phi\mathbf X^k,\)
\(s_k:=\|\mathbf Z_k\|_F^2,\) and
\(
\mathbf B_k:=
\mathbf G^\top
\mathbf A_\theta(\mathbf X^k)
\mathbf G .
\)
For simplicity, we write \(T_k:=\tanh(\tau_\eta-s_k).
\) The discrete HNRD layer gives the transverse update
\begin{equation}
\mathbf Z_{k+1}
=
\mathbf Z_k
+
h\left[
-\mathbf B_k\mathbf Z_k
+
\left(
R_\theta(\mathbf X^k)+T_k
\right)
\mathbf Z_k
\right].
\end{equation}
Equivalently,
\begin{equation}
\mathbf Z_{k+1}
=(1+hT_k)\mathbf Z_k-
h
\left(
\mathbf B_k
-
R_\theta(\mathbf X^k)\mathbf I
\right)
\mathbf Z_k .
\end{equation}

By definition,
\(
R_\theta(\mathbf X^k)
=
\frac{
\left\langle
\mathbf Z_k,
\mathbf B_k\mathbf Z_k
\right\rangle_F
}{
\|\mathbf Z_k\|_F^2
}.
\)
Hence, we have
\begin{equation}
\left\langle\mathbf Z_k,\left(\mathbf B_k-R_\theta(\mathbf X^k)\mathbf I\right)\mathbf Z_k\right\rangle_F=0.
\end{equation}
Therefore, the cross term vanishes when computing \(\|\mathbf Z_{k+1}\|_F^2\), and we obtain
\begin{equation}
s_{k+1}=(1+hT_k)^2s_k+h^2\|\left(\mathbf B_k-R_\theta(\mathbf X^k)\mathbf I\right)
\mathbf Z_k\|_F^2.
\end{equation}

By the normalized hypergraph diffusion assumption,
\begin{equation}
0\preceq \mathbf B_k\preceq \lambda_{\max}(\mathbf L_H)\mathbf I .
\end{equation}
Thus, using the standard variance bound for a symmetric operator whose spectrum lies in
\([0,\lambda_{\max}(\mathbf L_H)]\), we have
\begin{equation}
\|\left(
\mathbf B_k
-R_\theta(\mathbf X^k)\mathbf I
\right)
\mathbf Z_k
\|_F^2
\le
\frac{\lambda_{\max}^2(\mathbf L_H)}{4}s_k .
\end{equation}
Consequently,
\begin{equation}
s_{k+1}\le\left[(1+hT_k)^2+\frac{h^2}{4}\lambda_{\max}^2(\mathbf L_H)\right]s_k.
\end{equation}

Since the hypergraph Laplacian satisfies
\begin{equation}
0\le \lambda_{\max}(\mathbf L_H)\le2
\end{equation}
and \(0<h<1\), we can choose a constant \(\delta\in(0,1)\) such that
\begin{equation}
2h\delta 
>\left(
h^2\delta^2+
\frac{h^2}4\lambda_{\max}^2(\mathbf L_H)
\right).
\end{equation}
Because as \(s\to+\infty\), we have \(\tanh(\tau_\eta-s)\to -1,\) there exists \(M>0\) such that, whenever \(s_k\ge M\),
\begin{equation}
T_k=\tanh(\tau_\eta-s_k)\le -\delta .
\end{equation}
For such \(s_k\), we obtain
\begin{equation}
s_{k+1}
\le
\left[
(1-h\delta)^2
+
\frac{h^2}{4}\lambda_{\max}^2(\mathbf L_H)
\right]s_k .
\end{equation}
Set \(\rho:=(1-h\delta)^2+\frac{h^2}{4}\lambda_{\max}^2(\mathbf L_H).\) By the choice of \(\delta\), we have \(\rho<1\). Therefore, whenever \(s_k\ge M\),
\begin{equation}
s_{k+1}\le \rho s_k<s_k .
\end{equation}

On the other hand, if \(s_k<M\), then \(T_k\le1\), and hence
\begin{equation}
s_{k+1}\le\left[(1+h)^2+\frac{h^2}{4}\lambda_{\max}^2(\mathbf L_H)\right]M .
\end{equation}
Define \(C_h:=\max\{s_0,M,\left[(1+h)^2+\frac{h^2}{4}\lambda_{\max}^2(\mathbf L_H)\right]M\}.\)
Then the above two estimates imply by induction that
\begin{equation}
s_k\le C_h,
\quad
\forall k\ge0 .
\end{equation}
Thus the transverse energy remains uniformly bounded along depth.     
\end{proof}

\end{document}